\begin{document}
% The file aaai.sty is the style file for AAAI Press 
% proceedings, working notes, and technical reports.
%

\newtheorem{theorem}{Theorem}
\newtheorem{lemma}{Lemma}
\newtheorem{claim}{Claim}
\newtheorem{proposition}{Proposition}
\newtheorem{definition}{Definition}
\newtheorem{corollary}{Corollary}
\renewcommand{\phi}{\varphi}
\renewcommand{\epsilon}{\varepsilon}
\newcommand{\<}{\langle}
\renewcommand{\>}{\rangle}
\newenvironment{proof}{\noindent{\sc Proof.}}{\hfill $\boxtimes\hspace{2mm}$\linebreak}
\newcommand{\qed}{\hfill $\boxtimes\hspace{1mm}$}% \linebreak}

\newenvironment{proof-of-claim}{\noindent{\sc Proof of Claim.}}{\hfill $\boxtimes\hspace{2mm}$\linebreak}

\renewcommand{\H}{{\sf H}}
\newcommand{\K}{{\sf K}}
\newcommand{\N}{{\sf N}}
\newcommand{\B}{{\sf B}}
\newcommand{\cN}{{\sf \overline{N}}}
\newcommand{\cR}{{\sf \overline{R}}}
\newcommand{\KR}{\,\mbox{\scalebox{.75}{\framebox(10,10){$\sf R$}}}\,}
\newcommand{\KN}{\,\mbox{\scalebox{.75}{\framebox(10,10){$\sf N$}}}\,}
\renewcommand{\Box}{\,\mbox{\scalebox{.75}{\framebox(10,10){$ $}}}\,}
\newcommand{\cKN}{\overline{\mbox{\scalebox{.75}{\framebox(10,10){$\sf N$}}}}}
\newcommand{\R}{{\sf R}}
\newcommand{\SSS}{{\sf S}}
\newcommand{\M}{{\sf M}}
\newcommand{\A}{{\sf A}}

\newsavebox{\diamonddotsavebox}
\sbox{\diamonddotsavebox}{$\Diamond$\hspace{-1.8mm}\raisebox{0.3mm}{$\cdot$}\hspace{1mm}}
\newcommand{\diamonddot}{\usebox{\diamonddotsavebox}}

\title{Knowledge and Responsibility in Strategic Games}
\title{Knowledge and Responsibility of Strategic Coalitions}
\title{Blameworthiness in Strategic Games}

\author{Pavel Naumov \\Department of Mathematical Sciences\\  Claremont McKenna College\\Claremont, California 91711\\pnaumov@cmc.edu
\And  Jia Tao \\Department of Computer Science\\Lafayette College\\Easton, Pennsylvania 18042\\taoj@lafayette.edu}

\maketitle

\begin{abstract}
There are multiple notions of coalitional responsibility. The focus of this paper is on the blameworthiness defined through the principle of alternative possibilities: a coalition is blamable for a statement if the statement is true, but the coalition had a strategy to prevent it. The main technical result is a sound and complete bimodal logical system that describes properties of blameworthiness in one-shot games.
\end{abstract}

%THIS PAPER CONTAINS \pageref{end of paper} PAGES OUT OF ALLOWED 7 WITHOUT BIBLIOGRAPHY (ALLOWED: +1)

\section{Introduction}

It was a little after 9am on Friday, July 20th 2018, when a four-year-old boy accidentally shot his two-year old cousin in the town of Muscoy in Southern California. The victim was taken to a hospital, where she died an hour later~\cite{o18latimes}. The police arrested Cesar Lopez, victim's grandfather, as a felon in possession of a firearm and for child endangerment~\cite{jm18abc7}.

The first charge against Lopez, a previously convicted felon, is based on California Penal Code \S 29800 (a) (1) that prohibits firearm access to ``any person who has been convicted of, or has an outstanding warrant for, a felony under the laws of the United States, the State of California, or any other state, government, or country...". We assume that Lopez knew that California state law bans him from owning a gun, but his actions guaranteed that he broke the law.

The second charge is different because Lopez clearly never intended for his granddaughter to be killed. He never took any actions that would force her death. Nevertheless, he is {\em blamed} for not taking an action (locking the gun) to prevent the tragedy. Blameworthiness is tightly connected to the legal liability for negligence~\cite{g04mur}.

We are interested in logical systems for reasoning about different forms of responsibility.  
Xu~(\citeyear{x98jpl}) introduced a complete axiomatization of a modal logical system for reasoning about responsibility defined as taking actions that guarantee a certain outcome. In our example,  by possessing a gun Lopez guaranteed that he was responsible for braking California law. Broersen,  Herzig, and Troquard~(\citeyear{bht09jancl}) extended Xu's work from individual responsibility to group responsibility. 
In this paper we propose a complete logical system for reasoning about another form of responsibility that we call blameworthiness: a coalition is blamable for an outcome $\phi$ if $\phi$ is true, but the coalition had a strategy to prevent $\phi$. In our example, Lopez had a strategy to prevent the death by keeping the gun in a safe place.

% http://www.latimes.com/local/lanow/la-me-ln-muscoy-toddler-shooting-20180720-story.html

% http://abc7.com/ie-grandfather-out-of-jail-after-granddaughter-was-fatally-shot-with-his-gun-by-child/3815623/

% http://abc7.com/4-year-old-shoots-kills-toddler-cousin-in-ie;-grandpa-arrested/3794943/

\subsubsection{Principle of Alternative Possibilities}

Throughout centuries, blameworthiness, especially in the context of free will and moral responsibility, has been at the focus of philosophical discussions~\cite{se13eb}. Modern works on this topic include~\cite{f94p,fr00,nk07nous,m15ps,w17}. Frankfurt~(\citeyear{f69tjop}) acknowledges that a dominant role in these discussions has been played by what he calls a {\em principle of alternate possibilities}: ``a person is morally responsible for what he has done only if he could have done otherwise". As with many general principles, this one has many limitations  that Frankfurt discusses; for example, when a person is coerced into doing something. Following the established tradition~\cite{w17}, we refer to this principle as the principle of {\em alternative} possibilities. Cushman~(\citeyear{c15cop}) talks about {\em counterfactual possibility}: ``a person could have prevented their harmful conduct, even though they did not."  

Halpern and Pearl proposed several versions of a formal definition of causality as a relation between sets of variables~\cite{h16}. This definition uses the counterfactual requirement which formalizes the principle of alternative possibilities. Halpern and Kleiman-Weiner~(\citeyear{hk18aaai}) used a similar setting to define {\em degrees} of blameworthiness. Batusov and Soutchanski~(\citeyear{bs18aaai}) gave a counterfactual-based definition of causality in situation calculus. 

\subsubsection{Coalitional Power in Strategic Games}

Pauly~(\citeyear{p01illc,p02}) introduced logics of coalitional power that can be used to describe group abilities to achieve a certain result. His approach has been widely studied in the literature~\cite{g01tark,vw05ai,b07ijcai,sgvw06aamas,abvs10jal,avw09ai,b14sr,gjt13jaamas,alnr11jlc,ga17tark,ge18aamas}.%,nr18kr}. 

In this paper we use Marc Pauly's framework to define blameworthiness of coalitions of players in strategic (one-shot) games. We say that a coalition $C$ could be blamed for an outcome $\phi$ if $\phi$ is true, but the coalition $C$ had a strategy to prevent $\phi$. Thus, just like Halpern and Pearl's  formal definition of causality, our definition of blameworthiness is based on the principle of alternative possibilities. However, because Marc Pauly's framework separates agents and outcomes, the proposed definition of blameworthiness is different and, arguably, more succinct. 

The main technical result of this paper is a sound and complete bimodal logical system describing the interplay between group blameworthiness modality and necessity (or universal truth) modality. Our system is significantly different from earlier mentioned axiomatizations \cite{x98jpl} and \cite{bht09jancl} because our semantics incorporates the principle of alternative possibilities.

\subsubsection{Paper Outline}

This paper is organized as follows. First, we introduce the formal syntax and semantics of our logical system. Next, we state and discuss its axioms. In the section that follows, we give examples of formal derivations in our system. In the next two sections we prove the soundness and the completeness. The last section concludes with a discussion of possible future work.

\section{Syntax and Semantics}\label{syntax and semantics section}

In this paper we assume a fixed set $\mathcal{A}$ of agents and a fixed set of propositional variables $\sf Prop$. By a coalition we mean an arbitrary subset of set $\mathcal{A}$.
\begin{definition}\label{Phi}
$\Phi$ is the minimal set of formulae such that
\begin{enumerate}
    \item $p\in\Phi$ for each variable $p\in {\sf Prop}$,
    \item $\phi\to\psi,\neg\phi\in\Phi$ for all formulae $\phi,\psi\in\Phi$,
    \item $\N\phi$, $\B_C\phi\in\Phi$ for each coalition $C\subseteq\mathcal{A}$ and each formula $\phi\in\Phi$. 
\end{enumerate}
\end{definition}
In other words, language $\Phi$ is defined by  grammar:
$$
\phi := p\;|\;\neg\phi\;|\;\phi\to\phi\;|\;\N\phi\;|\;\B_C\phi.
$$
Formula $\N\phi$ is read as ``statement $\phi$ is true under each play" and formula $\B_C\phi$ as ``coalition $C$ is blamable for $\phi$".

%We abbreviate formula $\neg\Box_a\neg\phi$ as $\Diamond_a\phi$ and formula $\neg\R_a\neg\phi$ as $\cR_a\phi$. 
Boolean connectives $\vee$, $\wedge$, and $\leftrightarrow$ as well as constants $\bot$ and $\top$ are defined in the standard way. By formula $\cN\phi$ we mean $\neg\N\neg\phi$. For the disjunction of multiple formulae, we assume that parentheses are nested to the left. That is, formula $\chi_1\vee\chi_2\vee\chi_3$ is a shorthand for $(\chi_1\vee\chi_2)\vee\chi_3$. As usual, the empty disjunction is defined to be $\bot$. For any two sets $X$ and $Y$, by $X^Y$ we denote the set of all functions from $Y$ to $X$.

The formal semantics of modalities $\N$ and  $\B$ is defined in terms of models, which we call \emph{games}. 

\begin{definition}\label{game definition}
A game is a tuple $\left(\Delta,\Omega,P,\pi\right)$, where 
\begin{enumerate}
    %\item $I$ is a set of ``initial states'',
    %\item $\sim_a$ is an ``indistinguishability'' equivalence relation on set $I$,
    \item $\Delta$ is a nonempty set of ``actions",
    \item $\Omega$ is a set of ``outcomes",
    %\item $\approx_a$ is an  ``indistinguishability'' equivalence relation on set $\Omega$ for each agent $a\in\mathcal{A}$,
    \item the set of ``plays" $P$ is an arbitrary set of pairs $(\delta,\omega)$ such that $\delta\in\Delta^\mathcal{A}$ and $\omega\in\Omega$, %Furthermore, we assume that for each $\alpha\in I$ and each complete action profile $\delta\in\Delta^\mathcal{A}$, there is at least one $\omega\in \Omega$ such that $(\alpha,\delta,\omega)\in P$,
    \item $\pi$ is a function that maps $\sf Prop$ into subsets of $P$.
\end{enumerate}
\end{definition} 
The example from the introduction can be captured in our setting by assuming that Lopez is the only actor who has two possible actions: $hide$ and $expose$ the gun in the game with two outcomes $alive$ and $dead$. Although a complete action profile is a function from the set of all agents to the domain of actions, in a single agent case any such profile can be described by specifying just the action of the single player. Thus, by complete action profile $hide$ we mean action profile that maps agent Lopez into action $hide$. The set of possible plays of this game consists of pairs $\{(hide,alive),(expose,alive),(expose,dead)\}$.

The above definition of a game is very close but not identical to the definition of a game frame in Pauly~(\citeyear{p01illc,p02}) and the definition of a concurrent game structure, the semantics of ATL~\cite{ahk02}. Unlike these works, here we assume that the domain of choices is the same for all states and all agents. This difference is insignificant because all domains of choices in a game frame/concurrent game structure could be replaced with their union. More importantly, we assume that the mechanism is a relation, not a function. Our approach is more general, as it allows us to talk about blameworthiness in nondeterministic games, it also results in fewer axioms. Also, we do {\em not} assume that for any complete action profile $\delta$ there is at least one outcome $\omega$ such that $(\delta,\omega)\in P$. Thus, we allow the system to terminate under some action profiles without reaching an outcome. Without this assumption, we would need to add one extra axiom: $\neg\B_C\bot$ and to make minor changes in the  proof of the completeness.

Finally, in this paper we assume that atomic propositions are interpreted as statements about plays, not just outcomes. For example, the meaning of an atomic proposition $p$ could be statement ``either Lopez locked his gun or his granddaughter is dead". This is a more general approach than the one used in the existing literature, where atomic propositions are usually interpreted as statements about just outcomes. This difference is formally captured in the above definition through the assumption that value of $\pi$ is a set of plays, not just a set of outcomes. As a result of this more general approach, all other statements in our logical system are also statements about plays, not outcomes. This is why relation $\Vdash$ in Definition~\ref{sat} has a play (not an outcome) on the left. 

If $s_1$ and $s_2$ are action profiles of coalitions $C_1$ and $C_2$, respectively, and $C$ is any coalition such that $C\subseteq C_1\cap C_2$, then we write $s_1=_C s_2$ to denote that $s_1(a)=s_2(a)$ for each agent $a\in C$.

Next is the key definition of this paper. Its item 5 formally specifies blameworthiness using the principle of alternative possibilities.

\begin{definition}\label{sat} 
For any play $(\delta,\omega)\in P$ of a game
$\left(\Delta,\Omega,P,\pi\right)$ and any formula $\phi\in\Phi$, the satisfiability relation $(\delta,\omega)\Vdash\phi$ is defined recursively as follows:
\begin{enumerate}
    \item $(\delta,\omega)\Vdash p$ if $(\delta,\omega)\in \pi(p)$, where $p\in {\sf Prop}$,
    \item $(\delta,\omega)\Vdash \neg\phi$ if $(\delta,\omega)\nVdash \phi$,
    \item $(\delta,\omega)\Vdash\phi\to\psi$ if $(\delta,\omega)\nVdash\phi$ or $(\delta,\omega)\Vdash\psi$,
    \item $(\delta,\omega)\Vdash\N\phi$ if $(\delta',\omega')\Vdash\phi$ for each play $(\delta',\omega')\in P$,
    %\item $(\alpha,\delta,\omega)\Vdash\KN_a\phi$ if $(\alpha',\delta',\omega')\Vdash\phi$ for any play $(\alpha',\delta',\omega')\in P$ such that $\alpha\sim_a\alpha'$, 
    \item $(\delta,\omega)\Vdash\B_C\phi$ if $(\delta,\omega)\Vdash\phi$ and there is $s\in \Delta^C$ such that for each play $(\delta',\omega')\in P$, if $s=_C\delta'$, then $(\delta',\omega')\nVdash\phi$.
\end{enumerate}
\end{definition}
% For example, 
% $
% (expose, dead)\Vdash \B_{Lopez}(\mbox{``girl is dead"})
% $
% in the above case, because 
% $
% (expose, dead)\Vdash \mbox{``girl is dead"}
% $
% and
% $
% (hide, \omega)\Vdash \neg(\mbox{``girl is dead"})
% $
% for each outcome $\omega\in \{alive,dead\}$ such that $(hide, \omega)$ is a possible play.

\section{Axioms}\label{axioms section}

In addition to the propositional tautologies in  language $\Phi$, our logical system contains the following axioms.

\begin{enumerate}
    \item Truth: $\N\phi\to\phi$ and $\B_C\phi\to\phi$,
    \item Distributivity: $\N(\phi\to\psi)\to(\N\phi\to \N\psi)$,
    \item Negative Introspection: $\neg\N\phi\to\N\neg\N\phi$,
    \item None to Blame: $\neg\B_\varnothing\phi$,
    \item Joint Responsibility: if $C\cap D=\varnothing$, then\\ $\cN\B_C\phi\wedge\cN\B_D\psi\to (\phi\vee\psi\to\B_{C\cup D}(\phi\vee\psi))$,
    \item Blame for Cause: $\N(\phi\to\psi)\to(\B_C\psi\to(\phi\to \B_C\phi))$,
    \item Monotonicity: $\B_C\phi\to\B_D\phi$, where $C\subseteq D$,
    \item Fairness: $\B_C\phi\to\N(\phi\to\B_C\phi)$.
\end{enumerate}

We write $\vdash\phi$ if formula $\phi$ is provable from the axioms of our system using the Modus Ponens and
%$\phi,\phi\to\psi\vdash \psi$ 
the Necessitation inference rules:
$$
\dfrac{\phi,\phi\to\psi}{\psi},
\hspace{20mm}
\dfrac{\phi}{\N\phi}.
$$
%$\phi\vdash\KR_C\phi$ 
We write $X\vdash\phi$ if formula $\phi$ is provable from the theorems of our logical system and an additional set of axioms $X$ using only the Modus Ponens inference rule.

The Truth axiom for modality $\N$, the Distributivity axiom, and the Negative Introspection axiom together with the Necessitation inference rule capture the fact that modality $\N$, per Definition~\ref{sat}, is an S5 modality and thus satisfies all standard S5 properties. 

The Truth axiom for modality $\B$ states that any coalition can be blamed only for a statement which is true. The None to Blame axiom states that the empty coalition cannot be blamed for anything. Intuitively, this axiom is true because the empty coalition has no power to prevent anything.

The Joint Responsibility axiom states that if disjoint coalitions $C$ and $D$ can be blamed for statements $\phi$ and $\psi$, respectively, on {\em some other (possibly two different) plays of the game} and the disjunction $\phi\vee\psi$ is true on the current play, then the union of the two coalitions can be blamed for this disjunction on the current play. This axiom remotely resembles Xu~(\citeyear{x98jpl}) axiom for independence of individual agents, which in our notations can be stated as
$$
\cN\B_{a_1}\phi_1\wedge\dots\wedge\cN\B_{a_n}\phi_n\to \cN(\B_{a_1}\phi_1\wedge\dots\wedge\B_{a_n}\phi_n).
$$
Broersen, Herzig, and Troquard~(\citeyear{bht09jancl}) captured the independence of disjoint coalitions $C$ and $D$ in their Lemma 17:
$$
\cN\B_C\phi\wedge\cN\B_D\psi\to\cN(\B_C\phi\wedge\B_D\psi).
$$
In spite of these similarities, the definition of responsibility used in \cite{x98jpl} and \cite{bht09jancl} does not assume the principle of alternative possibilities.
The Joint Responsibility axiom is also similar to Marc Pauly~(\citeyear{p01illc,p02}) Cooperation axiom for logic of coalitional power:
$$
\SSS_C\phi\wedge\SSS_D\psi\to\SSS_{C\cup D}(\phi\wedge\psi),
$$
where coalitions $C$ and $D$ are disjoint and $\SSS_C\phi$ stands for ``coalition $C$ has a strategy to achieve $\phi$".

The Blame for Cause axiom states that if formula $\phi$ universally implies $\psi$ (informally, $\phi$ is a ``cause" of $\psi$), then any coalition blamable for $\psi$ should also be blamable for the ``cause" $\phi$ as long as $\phi$ is actually true.
The Monotonicity axiom states that any coalition is blamed for anything that a subcoalition is blamed for.
Finally, the Fairness axiom states that if a coalition $C$ is blamed for $\phi$, then it should be blamed for $\phi$ whenever $\phi$ is true.

\section{Examples of Derivations}\label{examples section}

The soundness of the axioms of our logical system is established in the next section.
In this section we give several examples of formal proofs in our system. Together with the Truth axiom, the first example shows that statements $\B_C\B_C\phi$ and $\B_C\phi$  are equivalent in our system. That is, coalition $C$ can be blamed for being blamed for $\phi$ if and only if it can be blamed for $\phi$.

\begin{lemma}\label{nested blame lemma}
$\vdash \B_C\phi\to\B_C\B_C\phi$.
\end{lemma}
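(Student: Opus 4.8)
The plan is to obtain the statement as a near-immediate consequence of the Blame for Cause axiom, instantiated so that the ``cause'' is $\B_C\phi$ itself and the ``effect'' is the weaker statement $\phi$. The key observation is that the Truth axiom $\B_C\phi\to\phi$, after one application of Necessitation, supplies exactly the boxed premise $\N(\cdot\to\cdot)$ that Blame for Cause requires; once that premise is discharged, only propositional reasoning remains.

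Concretely, first I would note that $\B_C\phi\to\phi$ is an instance of the Truth axiom, so by the Necessitation rule $\vdash\N(\B_C\phi\to\phi)$. Next I would instantiate Blame for Cause, $\N(\phi\to\psi)\to(\B_C\psi\to(\phi\to\B_C\phi))$, taking the ``cause'' formula to be $\B_C\phi$ (in the role of $\phi$) and the ``effect'' formula to be $\phi$ (in the role of $\psi$). This yields $\vdash\N(\B_C\phi\to\phi)\to(\B_C\phi\to(\B_C\phi\to\B_C\B_C\phi))$. Applying Modus Ponens to this and the necessitated theorem of the previous step gives $\vdash\B_C\phi\to(\B_C\phi\to\B_C\B_C\phi)$.

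Finally, I would collapse the doubled antecedent using the propositional contraction tautology $(\alpha\to(\alpha\to\beta))\to(\alpha\to\beta)$ together with Modus Ponens, producing $\vdash\B_C\phi\to\B_C\B_C\phi$, as desired.

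There is essentially no hard part here: the whole difficulty is the small act of spotting the right substitution into Blame for Cause, namely reading $\B_C\phi$ as a ``cause'' of the weaker fact $\phi$ (which is legitimate precisely because the Truth axiom makes $\B_C\phi\to\phi$ universally valid). After that single insight, everything is mechanical---one use of Truth, one use of Necessitation, one use of Blame for Cause, and propositional contraction. I do not expect to need the Joint Responsibility, Monotonicity, or Fairness axioms at all.
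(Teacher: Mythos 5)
Your proof is correct and is essentially identical to the paper's own argument: the same instantiation of the Blame for Cause axiom with $\B_C\phi$ as the ``cause'' and $\phi$ as the ``effect,'' the same use of the Truth axiom plus Necessitation to discharge the boxed premise, and the same final propositional contraction. Nothing to fix.
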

\begin{proof}
Note that $\vdash\B_C\phi\to\phi$ by the Truth axiom. Thus, $\vdash\N(\B_C\phi\to\phi)$ by the Necessitation rule. At the same time, $$\vdash\N(\B_C\phi\to\phi)\to(\B_C\phi\to(\B_C\phi\to\B_C\B_C\phi))$$ is an instance of the Blame for Cause axiom. Then, $\vdash\B_C\phi\to(\B_C\phi\to\B_C\B_C\phi)$ by the Modus Ponens inference rule. Therefore, $\vdash \B_C\phi\to\B_C\B_C\phi$ by the propositional reasoning. 
\end{proof}

The rest of the examples in this section are used later in the proof of the completeness.

\begin{lemma}\label{alt fairness lemma}
$\vdash\cN\B_C\phi\to(\phi\to\B_C\phi)$.
\end{lemma}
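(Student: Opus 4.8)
The plan is to recognize this lemma as a dual, ``possibility-side'' reformulation of the Fairness axiom and to derive it using the fact that $\N$ is a full S5 modality, as observed right after the list of axioms. Throughout, I would abbreviate $\psi:=\phi\to\B_C\phi$, so that the relevant instance of the Fairness axiom reads simply $\vdash\B_C\phi\to\N\psi$, and the goal becomes $\vdash\cN\B_C\phi\to\psi$.

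First I would push the Fairness implication through the modality $\cN$, that is, establish the monotonicity step $\vdash\cN\B_C\phi\to\cN\N\psi$. This is routine S5 bookkeeping: contraposing Fairness gives $\vdash\neg\N\psi\to\neg\B_C\phi$; applying the Necessitation rule and then the Distributivity axiom yields $\vdash\N\neg\N\psi\to\N\neg\B_C\phi$; and contraposing once more, then unfolding the definition $\cN\alpha=\neg\N\neg\alpha$, gives exactly $\vdash\cN\B_C\phi\to\cN\N\psi$.

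The key collapsing step is to observe that $\vdash\cN\N\psi\to\N\psi$. Unfolding $\cN$, this is $\vdash\neg\N\neg\N\psi\to\N\psi$, which is precisely the contrapositive of the Negative Introspection axiom $\neg\N\psi\to\N\neg\N\psi$ instantiated at $\psi$. Combining this with the Truth axiom $\vdash\N\psi\to\psi$, I obtain $\vdash\cN\N\psi\to\psi$.

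Finally, chaining the two implications $\vdash\cN\B_C\phi\to\cN\N\psi$ and $\vdash\cN\N\psi\to\psi$ by propositional reasoning gives $\vdash\cN\B_C\phi\to\psi$, that is, $\vdash\cN\B_C\phi\to(\phi\to\B_C\phi)$, as required. I do not expect any genuine obstacle here; the only points requiring care are the direction of the contrapositives in the monotonicity step, and the recognition that the collapse of $\cN\N\psi$ to $\N\psi$ is nothing but Negative Introspection read backwards.
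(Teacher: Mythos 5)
Your proof is correct and follows essentially the same route as the paper's: both derive $\vdash\N\neg\N\psi\to\N\neg\B_C\phi$ from Fairness via contraposition, Necessitation, and Distributivity, then use Negative Introspection and the Truth axiom to conclude. The only difference is cosmetic bookkeeping: you contrapose first and package Negative Introspection as the collapse $\vdash\cN\N\psi\to\N\psi$, whereas the paper applies Negative Introspection before contraposing --- propositionally the same argument.
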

\begin{proof}
Note that $\vdash \B_C\phi\to\N(\phi\to\B_C\phi)$ by the Fairness axiom. Hence, $\vdash \neg\N(\phi\to\B_C\phi)\to \neg\B_C\phi$, by the law of contrapositive. Thus, $\vdash \N(\neg\N(\phi\to\B_C\phi)\to \neg\B_C\phi)$ by the Necessitation inference rule. Hence, by the Distributivity axiom and the Modus Ponens inference rule,
$$\vdash \N\neg\N(\phi\to\B_C\phi)\to \N\neg\B_C\phi.$$ At the same time, by the Negative Introspection axiom:
$$
\vdash \neg\N(\phi\to\B_C\phi)\to\N\neg\N(\phi\to\B_C\phi).
$$
Thus, by the laws of propositional reasoning,
$$\vdash \neg\N(\phi\to\B_C\phi)\to \N\neg\B_C\phi.$$
Hence, by the law of contrapositive,
$$\vdash \neg\N\neg\B_C\phi\to \N(\phi\to\B_C\phi).$$
Note that $\N(\phi\to\B_C\phi)\to(\phi\to\B_C\phi)$ is an instance of the Truth axiom. Thus, by propositional reasoning,
$$\vdash \neg\N\neg\B_C\phi\to (\phi\to\B_C\phi).$$
Hence, $\vdash \cN\B_C\phi\to (\phi\to\B_C\phi)$ by the definition of $\cN$.
\end{proof}

\begin{lemma}\label{alt cause lemma}
If $\vdash \phi\leftrightarrow \psi$, then $\vdash \B_C\phi\to\B_C\psi$.
\end{lemma}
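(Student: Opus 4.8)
The plan is to lean entirely on the Blame for Cause axiom, which already provides a way to change the formula inside a blame modality whenever one formula necessarily implies the other. The only subtlety is to instantiate that axiom in the correct orientation. First I would write the axiom schematically as $\N(\alpha\to\beta)\to(\B_C\beta\to(\alpha\to\B_C\alpha))$ and choose $\alpha:=\psi$ and $\beta:=\phi$, which yields the instance $\N(\psi\to\phi)\to(\B_C\phi\to(\psi\to\B_C\psi))$. Since the hypothesis $\vdash\phi\leftrightarrow\psi$ gives in particular $\vdash\psi\to\phi$, the Necessitation rule produces $\vdash\N(\psi\to\phi)$, and one application of Modus Ponens then delivers $\vdash\B_C\phi\to(\psi\to\B_C\psi)$.

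It remains to discharge the antecedent $\psi$ appearing on the right. Here I would invoke the Truth axiom $\vdash\B_C\phi\to\phi$ together with the other direction $\vdash\phi\to\psi$ of the hypothesis to obtain $\vdash\B_C\phi\to\psi$ by propositional reasoning. Combining $\vdash\B_C\phi\to\psi$ with the formula $\vdash\B_C\phi\to(\psi\to\B_C\psi)$ derived in the previous step, propositional logic immediately yields the desired $\vdash\B_C\phi\to\B_C\psi$.

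I do not expect a genuine obstacle: the argument is short and uses only Necessitation, one Modus Ponens, the Truth axiom, and propositional reasoning. The single point that demands care is the orientation of the Blame for Cause axiom. One must set $\alpha=\psi$ so that the axiom's conclusion is $\B_C\psi$ (the formula we are trying to prove), and $\beta=\phi$ so that its inner hypothesis is the available $\B_C\phi$; this forces the necessity premise to be $\N(\psi\to\phi)$, which is exactly the half of the equivalence supplied by Necessitation. Choosing the reverse instantiation would leave one trying to derive a statement about $\B_C\phi$ from a hypothesis about $\B_C\psi$, which is not what the lemma asks for.
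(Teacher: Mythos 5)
Your proposal is correct and matches the paper's own proof essentially step for step: the same instance $\N(\psi\to\phi)\to(\B_C\phi\to(\psi\to\B_C\psi))$ of the Blame for Cause axiom, the same use of Necessitation on $\psi\to\phi$, and the same discharge of the antecedent $\psi$ via the Truth axiom $\B_C\phi\to\phi$ combined with $\phi\to\psi$. Your remark about the orientation of the instantiation is exactly the one delicate point, and you resolved it the same way the paper does.
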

\begin{proof}
By the Blame for Cause axiom,
$$
\vdash \N(\psi\to\phi)\to(\B_C\phi\to(\psi\to \B_C\psi)).
$$
Assumption  $\vdash \phi\leftrightarrow \psi$ implies $\vdash \psi\to \phi$ by the laws of propositional reasoning. Thus, $\vdash \N(\psi\to \phi)$ by the Necessitation inference rule. Hence, by the Modus Ponens rule,
$$
\vdash \B_C\phi\to(\psi\to \B_C\psi).
$$
Thus, by the laws of propositional reasoning,
\begin{equation}\label{sofia}
\vdash (\B_C\phi\to\psi)\to (\B_C\phi\to \B_C\psi).
\end{equation}
Note that $\vdash \B_C\phi\to\phi$ by the Truth axiom. At the same time, $\vdash \phi\leftrightarrow \psi$ by the assumption of the lemma. Thus, by the laws of propositional reasoning, $\vdash \B_C\phi\to\psi$. Therefore,
$
\vdash \B_C\phi\to \B_C\psi
$
by the Modus Ponens inference rule from statement~(\ref{sofia}).
\end{proof}

\begin{lemma}\label{add cN lemma}
$\phi\vdash \cN\phi$.
\end{lemma}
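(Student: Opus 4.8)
The plan is to show that the implication $\phi\to\cN\phi$ is a theorem of the full system, and then obtain $\cN\phi$ from the hypothesis $\phi$ by a single application of Modus Ponens. Recall that $\cN\phi$ abbreviates $\neg\N\neg\phi$, the dual of the necessity modality $\N$. Since $\N$ is an S5 (in particular reflexive) modality, its dual should satisfy $\phi\to\cN\phi$, and this is the fact I would isolate first.

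First I would instantiate the Truth axiom for $\N$ at the formula $\neg\phi$ rather than at $\phi$, obtaining $\vdash\N\neg\phi\to\neg\phi$. Taking the contrapositive by propositional reasoning then yields $\vdash\phi\to\neg\N\neg\phi$, which is exactly $\vdash\phi\to\cN\phi$ once we unfold the definition of $\cN$. This establishes the desired implication as a genuine theorem of the system, i.e.\ derivable using both Modus Ponens and Necessitation.

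Finally, I would combine this theorem with the hypothesis. Since $\phi$ is taken as an additional axiom in $X=\{\phi\}$, and $\phi\to\cN\phi$ is a theorem of the system, one application of Modus Ponens gives $\phi\vdash\cN\phi$, as required.

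The one point that needs care, rather than any real obstacle, is the restricted notion of derivation behind the turnstile $X\vdash$: it permits only Modus Ponens and the theorems of the system, and crucially \emph{not} the Necessitation rule applied to members of $X$. It would be a mistake to try to deduce $\cN\phi$ from $\phi$ by necessitating the hypothesis. The argument sidesteps this entirely, because the work of Necessitation is done once and for all inside the proof of the theorem $\phi\to\cN\phi$ (where Necessitation is available), leaving only a Modus Ponens step to perform under the weaker $X\vdash$ relation.
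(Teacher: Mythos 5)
Your proof is correct and is essentially identical to the paper's own: both instantiate the Truth axiom at $\neg\phi$, take the contrapositive to get $\vdash\phi\to\cN\phi$, and finish with one application of Modus Ponens under $X\vdash$. Your closing remark about why Necessitation must not be applied to the hypothesis $\phi$ is also accurate and correctly explains why the detour through the theorem $\phi\to\cN\phi$ is needed.
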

\begin{proof}
By the Truth axioms, $\vdash\N\neg\phi\to\neg\phi$. Thus, by the law of contrapositive, $\vdash\phi\to \neg\N\neg\phi$. Hence, $\vdash\phi\to \cN\phi$ by the definition of the modality $\cN$. Therefore, $\phi\vdash \cN\phi$ by the Modus Ponens inference rule.
\end{proof}

The next lemma generalizes the Joint Responsibility axiom from two coalitions to multiple coalitions.
\begin{lemma}\label{super joint responsibility lemma}
For any integer $n\ge 0$ and any pairwise disjoint sets $D_1,\dots,D_n$,
$$
\{\cN\B_{D_i}\chi_i\}_{i=1}^n,\chi_1\vee\dots\vee\chi_n
\vdash \B_{D_1\cup\dots\cup D_n}(\chi_1\vee \dots\vee\chi_n).
$$
\end{lemma}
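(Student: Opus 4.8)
The plan is to argue by induction on $n$. The case $n=0$ is immediate: the only premise is then the empty disjunction $\bot$, and $\bot\vdash\B_\varnothing\bot$ by propositional reasoning. For $n\ge 1$ the naive induction on the statement as written does not close: in the step from $n$ to $n+1$ the available premise is $\chi_1\vee\dots\vee\chi_{n+1}$, which is strictly weaker than the premise $\chi_1\vee\dots\vee\chi_n$ that the induction hypothesis requires. A proof by cases on which disjunct happens to be true also fails, because knowing $\B_{D_j}\chi_j$ only yields $\B_{D_1\cup\dots\cup D_{n+1}}\chi_j$ by Monotonicity, and blame for a single disjunct cannot be upgraded to blame for the whole disjunction: a strategy preventing $\chi_j$ says nothing about preventing the remaining disjuncts.

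To get around this I would first prove, again by induction on $n\ge 1$, the auxiliary ``possibility'' statement
\[
\{\cN\B_{D_i}\chi_i\}_{i=1}^n\vdash \cN\B_{D_1\cup\dots\cup D_n}(\chi_1\vee\dots\vee\chi_n),
\]
which, crucially, never mentions the truth of the disjunction and so is immune to the mismatch above. Its base case is trivial. Its inductive step reduces to a single two-coalition fact: for disjoint $C$ and $D$,
\[
\vdash \cN\B_C\phi\wedge\cN\B_D\psi\to\cN\B_{C\cup D}(\phi\vee\psi).
\]
Applying this with $C=D_1\cup\dots\cup D_n$ and $D=D_{n+1}$ (disjoint by the pairwise-disjointness hypothesis) to the induction hypothesis and to the premise $\cN\B_{D_{n+1}}\chi_{n+1}$ then yields the $(n+1)$-case at once.

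The derivation of this two-coalition fact is the main obstacle, and it is where the Joint Responsibility axiom enters. That axiom supplies $\cN\B_C\phi\wedge\cN\B_D\psi\to(\phi\vee\psi\to\B_{C\cup D}(\phi\vee\psi))$, i.e.\ a blame claim that is only conditional on the disjunction being true; I must convert this conditional into the unconditional possibility $\cN\B_{C\cup D}(\phi\vee\psi)$. Two ingredients are needed. First, the disjunction is itself possible: from the Truth axiom we have $\vdash\B_C\phi\to\phi\vee\psi$, and monotonicity of $\cN$ then gives $\vdash\cN\B_C\phi\to\cN(\phi\vee\psi)$. Second, because the antecedent $\Theta:=\cN\B_C\phi\wedge\cN\B_D\psi$ is a conjunction of $\cN$-formulae, the Negative Introspection axiom (instantiated at $\neg\alpha$) yields $\vdash\cN\alpha\to\N\cN\alpha$ and hence $\vdash\Theta\to\N\Theta$; necessitating and distributing over the conditional supplied by Joint Responsibility places it under a necessity, $\vdash\Theta\to\N(\phi\vee\psi\to\B_{C\cup D}(\phi\vee\psi))$. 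Combining this with the standard S5 theorem $\vdash\N(\theta\to\beta)\to(\cN\theta\to\cN\beta)$ and with the possibility of the disjunction then delivers $\vdash\Theta\to\cN\B_{C\cup D}(\phi\vee\psi)$.

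Finally I would recover the lemma itself from the auxiliary statement. Given all of $\{\cN\B_{D_i}\chi_i\}_{i=1}^n$, the auxiliary statement yields $\cN\B_{D_1\cup\dots\cup D_n}(\chi_1\vee\dots\vee\chi_n)$; feeding this together with the remaining premise $\chi_1\vee\dots\vee\chi_n$ into Lemma~\ref{alt fairness lemma}, namely $\vdash\cN\B_C\phi\to(\phi\to\B_C\phi)$, produces $\B_{D_1\cup\dots\cup D_n}(\chi_1\vee\dots\vee\chi_n)$ by two applications of Modus Ponens, which completes the induction.
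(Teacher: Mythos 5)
Your proof is correct, but it takes a genuinely different route from the paper's. The paper keeps the truth premise inside the induction and resolves exactly the mismatch you identified by a propositional case split: for $n\ge 2$ it derives the conclusion twice, once from the stronger premise $\chi_1\vee\dots\vee\chi_{n-1}$ (induction hypothesis on the first $n-1$ coalitions, upgraded from $\B$ to $\cN\B$ by Lemma~\ref{add cN lemma}, then the Joint Responsibility axiom with $D_n$) and once from $\chi_2\vee\dots\vee\chi_n$ (symmetrically, with Lemma~\ref{alt cause lemma} repairing the left-nested associativity), and then glues the two branches together using the tautology $\chi_1\vee\dots\vee\chi_n\to(\chi_1\vee\dots\vee\chi_{n-1})\vee(\chi_2\vee\dots\vee\chi_n)$; the $n=1$ case is Lemma~\ref{alt fairness lemma}. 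You instead strengthen the induction hypothesis to the truth-free possibility statement $\{\cN\B_{D_i}\chi_i\}_{i=1}^n\vdash\cN\B_{D_1\cup\dots\cup D_n}(\chi_1\vee\dots\vee\chi_n)$ and reattach the truth premise only once, at the very end, again via Lemma~\ref{alt fairness lemma}. The price is your two-coalition theorem $\vdash\cN\B_C\phi\wedge\cN\B_D\psi\to\cN\B_{C\cup D}(\phi\vee\psi)$, and your derivation of it goes through in the paper's system: $\cN\alpha\to\N\cN\alpha$ is indeed Negative Introspection instantiated at $\neg\alpha$, $\cN\B_C\phi\to\cN(\phi\vee\psi)$ follows from the Truth axiom plus the K-monotonicity of $\cN$ (Necessitation and Distributivity), and $\N(\theta\to\beta)\to(\cN\theta\to\cN\beta)$ is likewise a pure K consequence; note also that all your uses of Necessitation are applied to theorems only, as the paper's definition of $X\vdash\phi$ requires. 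As for what each approach buys: the paper's double-split needs no modal machinery beyond lemmas it already proved, but applies the induction hypothesis twice and must juggle associativity; your route applies it once, avoids re-association entirely (since $(\chi_1\vee\dots\vee\chi_n)\vee\chi_{n+1}$ is literally $\chi_1\vee\dots\vee\chi_{n+1}$ under the paper's left-nesting convention), and isolates a reusable possibilistic form of Joint Responsibility that is a close cousin of the Broersen--Herzig--Troquard independence principle $\cN\B_C\phi\wedge\cN\B_D\psi\to\cN(\B_C\phi\wedge\B_D\psi)$ quoted in the paper's axioms section.
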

\begin{proof}
We prove the lemma by induction on $n$. If $n=0$, then disjunction $\chi_1\vee\dots\vee \chi_n$ is Boolean constant false $\bot$ by definition. Thus, the statement of the lemma is $\bot\vdash\B_\varnothing\bot$, which is provable in the propositional logic due to the assumption $\bot$ on the left-hand side of $\vdash$.

Next, suppose that $n=1$. Then, from Lemma~\ref{alt fairness lemma} it follows that
$\cN\B_{D_1}\chi_1,\chi_1\vdash\B_{D_1}\chi_1$.

Suppose that $n\ge 2$. By the Joint Responsibility axiom and the Modus Ponens inference rule,
\begin{eqnarray*}
&&\hspace{-8mm}\cN\B_{D_1\cup \dots \cup D_{n-1}}(\chi_1\vee\dots\vee\chi_{n-1}),
\cN\B_{D_n}\chi_n,\\
&&\hspace{-8mm}\chi_1\vee\dots\vee\chi_{n-1}\vee\chi_n\\
&&\vdash \B_{D_1\cup \dots \cup D_{n-1}\cup D_n}(\chi_1\vee\dots\vee\chi_{n-1}\vee \chi_n).
\end{eqnarray*}
Thus, by Lemma~\ref{add cN lemma},
\begin{eqnarray*}
&&\hspace{-8mm}\B_{D_1\cup \dots \cup D_{n-1}}(\chi_1\vee\dots\vee\chi_{n-1}),
\cN\B_{D_n}\chi_n,\\
&&\hspace{-8mm}\chi_1\vee\dots\vee\chi_{n-1}\vee\chi_n\\
&&\vdash \B_{D_1\cup \dots \cup D_{n-1}\cup D_n}(\chi_1\vee\dots\vee\chi_{n-1}\vee \chi_n).
\end{eqnarray*}
At the same time, by the induction hypothesis,
\begin{eqnarray*}
&&\hspace{-8mm}\{\cN\B_{D_i}\chi_i\}_{i=1}^{n-1},\chi_1\vee\dots\vee\chi_{n-1}\\
&&\vdash \B_{D_1\cup\dots\cup D_{n-1}}(\chi_1\vee \dots\vee\chi_{n-1}).
\end{eqnarray*}
Hence,
\begin{eqnarray*}
&&\hspace{-5mm}\{\cN\B_{D_i}\chi_i\}_{i=1}^n,\chi_1\vee\dots\vee\chi_{n-1},\chi_1\vee\dots\vee\chi_{n-1}\vee\chi_n\\
&&\vdash \B_{D_1\cup\dots\cup D_{n-1}\cup D_n}(\chi_1\vee \dots\vee\chi_{n-1}\vee\chi_n).
\end{eqnarray*}
Since $\chi_1\vee\dots\vee\chi_{n-1}\vdash\chi_1\vee\dots\vee\chi_{n-1}\vee\chi_n$ is provable in propositional logic,
\begin{eqnarray}
&&\hspace{-10mm}\{\cN\B_{D_i}\chi_i\}_{i=1}^n,\chi_1\vee\dots\vee\chi_{n-1}\nonumber\\
&&\hspace{0mm} \vdash \B_{D_1\cup\dots\cup D_{n-1}\cup D_n}(\chi_1\vee \dots\vee\chi_{n-1}\vee\chi_n).\label{part 1}
\end{eqnarray}
Similarly, by the Joint Responsibility axiom and the Modus Ponens inference rule,
\begin{eqnarray*}
&&\hspace{-8mm}\cN\B_{D_1}\chi_1,\cN\B_{D_2\cup \dots \cup D_n}(\chi_2\vee\dots\vee\chi_n),\\
&&\hspace{-8mm}\chi_1\vee(\chi_2\vee\dots\vee\chi_n)\\
&&\vdash \B_{D_1\cup \dots \cup D_{n-1}\cup D_n}(\chi_1\vee(\chi_2\vee\dots\vee \chi_n)).
\end{eqnarray*}
Since formula 
$\chi_1\vee(\chi_2\vee\dots\vee \chi_n)\leftrightarrow \chi_1\vee\chi_2\vee\dots\vee \chi_n$ is provable in the propositional logic, by Lemma~\ref{alt cause lemma},
\begin{eqnarray*}
&&\hspace{-7mm}\cN\B_{D_1}\chi_1,\cN\B_{D_2\cup \dots \cup D_n}(\chi_2\vee\dots\vee\chi_n),\chi_1\vee\chi_2\vee\dots\vee\chi_n\\
&&\vdash \B_{D_1\cup \dots \cup D_{n-1}\cup D_n}(\chi_1\vee\chi_2\vee\dots\vee \chi_n).
\end{eqnarray*}
Thus, by Lemma~\ref{add cN lemma},
\begin{eqnarray*}
&&\hspace{-7mm}\cN\B_{D_1}\chi_1,\B_{D_2\cup \dots \cup D_n}(\chi_2\vee\dots\vee\chi_n),\chi_1\vee\chi_2\vee\dots\vee\chi_n\\
&&\vdash \B_{D_1\cup \dots \cup D_{n-1}\cup D_n}(\chi_1\vee\chi_2\vee\dots\vee \chi_n).
\end{eqnarray*}
At the same time, by the induction hypothesis,
% \begin{eqnarray*}
% &&\hspace{-8mm}\cN\B_{D_2}\chi_2,\dots,\cN\B_{D_n}\chi_{n},\chi_2\vee\dots\vee\chi_n\\
% &&\vdash \B_{D_2\cup\dots\cup D_n}(\chi_2\vee \dots\vee\chi_n).
% \end{eqnarray*}
$$
\{\cN\B_{D_i}\chi_i\}_{i=2}^n,\chi_2\vee\dots\vee\chi_n
\vdash \B_{D_2\cup\dots\cup D_n}(\chi_2\vee \dots\vee\chi_n).
$$
Hence,
\begin{eqnarray*}
&&\hspace{-8mm}\{\cN\B_{D_i}\chi_i\}_{i=1}^n,\chi_2\vee\dots\vee\chi_n,\chi_1\vee\chi_2\vee\dots\vee\chi_n\\
&&\vdash \B_{D_1\cup D_2\cup\dots\cup D_n}(\chi_1\vee\chi_2\vee\dots\vee\chi_n).
\end{eqnarray*}
Since $\chi_2\vee\dots\vee\chi_{n}\vdash\chi_1\vee\dots\vee\chi_{n-1}\vee\chi_n$ is provable in propositional logic,
\begin{eqnarray}
&&\hspace{-10mm}\{\cN\B_{D_i}\chi_i\}_{i=1}^n,\chi_2\vee\dots\vee\chi_n\nonumber\\
&&\hspace{0mm} \vdash \B_{D_1\cup\dots\cup D_{n-1}\cup D_n}(\chi_1\vee\chi_2\vee\dots\vee\chi_n).\label{part 2}
\end{eqnarray}
Finally, note that the following statement is provable in the propositional logic for $n\ge 2$,
$$
\vdash\chi_1\vee\dots\vee\chi_n\to(\chi_1\vee\dots\vee\chi_{n-1})\vee 
(\chi_2\vee\dots\vee\chi_n).
$$
Therefore, from statement~(\ref{part 1}) and statement~(\ref{part 2}), 
% \begin{eqnarray*}
% &&\hspace{-8mm}\cN\B_{D_1}\chi_1,\cN\B_{D_2}\chi_2,\dots,\cN\B_{D_n}\chi_{n},\chi_1\vee\dots\vee\chi_{n}\\
% &&\hspace{0mm} \vdash \B_{D_1\cup\dots\cup D_{n-1}\cup D_n}(\chi_1\vee\chi_2\vee\dots\vee\chi_n)
% \end{eqnarray*}
$$
\{\cN\B_{D_i}\chi_i\}_{i=1}^n,\chi_1\vee\dots\vee\chi_n
\vdash \B_{D_1\cup\dots\cup D_n}(\chi_1\vee \dots\vee\chi_n)
$$
by the laws of propositional reasoning.
\end{proof}

\begin{lemma}\label{super distributivity}
If $\phi_1,\dots,\phi_n\vdash\psi$, then $\N\phi_1,\dots,\N\phi_n\vdash\N\psi$.
\end{lemma}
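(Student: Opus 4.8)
The plan is to prove this by induction on $n$, reducing the claim at each step by means of the deduction theorem for the relation $X\vdash\phi$, and then combining the Necessitation rule with the Distributivity axiom. The first thing I would pin down is that $X\vdash\phi$ satisfies the deduction theorem, i.e. $X,\phi\vdash\psi$ if and only if $X\vdash\phi\to\psi$. This holds because, by definition, any derivation from a set of additional axioms $X$ is permitted to use \emph{only} the Modus Ponens inference rule (Necessitation is not available once $X$ is nonempty), so the standard Hilbert-style argument for the deduction theorem goes through verbatim, using that all propositional tautologies in $\Phi$ are axioms. This is exactly the place where the restriction on inference rules in the definition of $X\vdash\phi$ is used.

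With the deduction theorem in hand the induction is short. For the base case $n=0$ the hypothesis is simply $\vdash\psi$, and then $\vdash\N\psi$ follows immediately by the Necessitation rule, which is the desired conclusion. For the inductive step, assume the statement for $n-1$ and suppose $\phi_1,\dots,\phi_n\vdash\psi$. By the deduction theorem, $\phi_1,\dots,\phi_{n-1}\vdash\phi_n\to\psi$. The induction hypothesis then gives $\N\phi_1,\dots,\N\phi_{n-1}\vdash\N(\phi_n\to\psi)$. Since $\N(\phi_n\to\psi)\to(\N\phi_n\to\N\psi)$ is an instance of the Distributivity axiom, Modus Ponens yields $\N\phi_1,\dots,\N\phi_{n-1}\vdash\N\phi_n\to\N\psi$; adding $\N\phi_n$ to the hypotheses and applying Modus Ponens once more gives $\N\phi_1,\dots,\N\phi_n\vdash\N\psi$, completing the induction.

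There is no deep obstacle here, but the one genuinely delicate point—and the reason the lemma holds at all—is the interplay between the two provability relations. Necessitation may be applied freely to theorems of the system (as in the base case, where it turns $\vdash\psi$ into $\vdash\N\psi$), whereas under the additional hypotheses $\phi_1,\dots,\phi_n$ only Modus Ponens is allowed, which is precisely what makes the deduction theorem available. Were Necessitation permitted on the right-hand side of $\vdash$ with nonempty $X$, the deduction theorem would fail and this reduction would break down. Once these two facts are isolated, the remaining steps are routine propositional bookkeeping.
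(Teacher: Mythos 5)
Your proof is correct and follows essentially the same route as the paper's: both rest on the deduction theorem for $X\vdash\phi$ (valid precisely because that relation admits only Modus Ponens), one application of Necessitation to a pure theorem, and repeated use of the Distributivity axiom with Modus Ponens. The only difference is presentational---you organize the argument as an explicit induction on $n$, whereas the paper discharges all $n$ hypotheses at once into a nested implication and then peels off the antecedents iteratively; unfolded, the two derivations coincide, and your explicit justification of the deduction theorem is a point the paper leaves implicit.
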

\begin{proof}
By the deduction lemma applied $n$ times, assumption $\phi_1,\dots,\phi_n\vdash\psi$ implies that
$$
\vdash\phi_1\to(\phi_2\to\dots(\phi_n\to\psi)\dots).
$$
Hence, by the Necessitation inference rule,
$$
\vdash\N(\phi_1\to(\phi_2\to\dots(\phi_n\to\psi)\dots)).
$$
Thus, by the Distributivity axiom and the Modus Ponens,
$$
\vdash\N\phi_1\to\N(\phi_2\to\dots(\phi_n\to\psi)\dots).
$$
Hence, by the Modus Ponens inference rule,
$$
\N\phi_1\vdash\N(\phi_2\to\dots(\phi_n\to\psi)\dots).
$$
Therefore, by applying the previous steps $(n-1)$ more times,
$\N\phi_1,\dots,\N\phi_n\vdash\N\psi$.
\end{proof}

\begin{lemma}\label{positive introspection lemma}
$\vdash \N\phi\to\N\N\phi$.
\end{lemma}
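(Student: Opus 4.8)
The statement $\N\phi\to\N\N\phi$ is the positive introspection property for $\N$; in standard modal terms it is the S4 axiom, which is known to be derivable in S5, i.e.\ from the Truth, Distributivity, and Negative Introspection axioms together with the Necessitation rule. Semantically it is immediate, since $\N$ is a universal modality and so $\N\phi$ and $\N\N\phi$ have the same truth value on every play. The plan for the syntactic derivation is to build the target implication as a chain of three implications that successively transform $\N\phi$ into $\N\N\phi$, in the same style as the derivation already carried out in Lemma~\ref{alt fairness lemma}.

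The crux is a single step that uses Necessitation and Distributivity to insert the extra $\N$. First I would take the contrapositive of the Negative Introspection axiom $\neg\N\phi\to\N\neg\N\phi$ to obtain $\neg\N\neg\N\phi\to\N\phi$. Applying the Necessitation rule to this theorem and then the Distributivity axiom with Modus Ponens yields $\N\neg\N\neg\N\phi\to\N\N\phi$; the $\N\N$ on the right appears precisely because we box the conclusion $\N\phi$. (Equivalently, this step is an instance of Lemma~\ref{super distributivity} with $n=1$, applied to $\neg\N\neg\N\phi\vdash\N\phi$.) This is the only nonroutine move in the proof.

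It then remains to supply the two implications feeding into this one, both of which are easy. The implication $\neg\N\neg\N\phi\to\N\neg\N\neg\N\phi$ is obtained by simply recognizing it as the instance of the Negative Introspection axiom with $\neg\N\phi$ substituted for $\phi$. The implication $\N\phi\to\neg\N\neg\N\phi$ follows from the Truth axiom: the instance $\N\neg\N\phi\to\neg\N\phi$ gives, by contraposition, $\N\phi\to\neg\N\neg\N\phi$. Chaining $\N\phi\to\neg\N\neg\N\phi$, then $\neg\N\neg\N\phi\to\N\neg\N\neg\N\phi$, then $\N\neg\N\neg\N\phi\to\N\N\phi$ by propositional reasoning delivers $\N\phi\to\N\N\phi$, and everything except the boxing step is bookkeeping with the Truth axiom and contrapositives.
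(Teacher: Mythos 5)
Your proposal is correct and matches the paper's own derivation essentially step for step: both obtain $\N\phi\to\neg\N\neg\N\phi$ from the Truth axiom by contraposition, insert $\N\neg\N\neg\N\phi$ via the Negative Introspection instance with $\neg\N\phi$ in place of $\phi$, and derive $\N\neg\N\neg\N\phi\to\N\N\phi$ by applying Necessitation and Distributivity to the contrapositive $\neg\N\neg\N\phi\to\N\phi$ of Negative Introspection. No differences worth noting.
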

\begin{proof}
Formula $\N\neg\N\phi\to\neg\N\phi$ is an instance of the Truth axiom. Thus, $\vdash \N\phi\to\neg\N\neg\N\phi$ by contraposition. Hence, taking into account the following instance of  the Negative Introspection axiom: $\neg\N\neg\N\phi\to\N\neg\N\neg\N\phi$,
we have 
\begin{equation}\label{pos intro eq 2}
\vdash \N\phi\to\N\neg\N\neg\N\phi.
\end{equation}

At the same time, $\neg\N\phi\to\N\neg\N\phi$ is an instance of the Negative Introspection axiom. Thus, $\vdash \neg\N\neg\N\phi\to \N\phi$ by the law of contrapositive in the propositional logic. Hence, by the Necessitation inference rule, 
$\vdash \N(\neg\N\neg\N\phi\to \N\phi)$. Thus, by  the Distributivity axiom and the Modus Ponens inference rule, 
$
   \vdash \N\neg\N\neg\N\phi\to \N\N\phi.
$
 The latter, together with statement~(\ref{pos intro eq 2}), implies the statement of the lemma by propositional reasoning.
\end{proof}

\begin{lemma}\label{five plus plus}
For any integer $n\ge 0$ and any disjoint sets $D_1,\dots,D_n\subseteq C$,
$$
\{\cN\B_{D_i}\chi_i\}_{i=1}^n,\N(\phi\to\chi_1\vee\dots\vee\chi_n)\vdash\N(\phi\to\B_C\phi).
$$
\end{lemma}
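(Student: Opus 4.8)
The plan is to first establish the ``local'' statement
$$
\{\cN\B_{D_i}\chi_i\}_{i=1}^n,\N(\phi\to\chi_1\vee\dots\vee\chi_n)\vdash\phi\to\B_C\phi,
$$
writing $\Gamma$ for the set of hypotheses on the left, and then to move the $\N$ modality out in front by appealing to Lemma~\ref{super distributivity} together with the introspection principles.

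To obtain the local statement I would combine three ingredients. First, Lemma~\ref{super joint responsibility lemma} gives
$$
\{\cN\B_{D_i}\chi_i\}_{i=1}^n,\chi_1\vee\dots\vee\chi_n\vdash\B_{D_1\cup\dots\cup D_n}(\chi_1\vee\dots\vee\chi_n),
$$
and, since $D_1\cup\dots\cup D_n\subseteq C$, the Monotonicity axiom upgrades the conclusion to $\B_C(\chi_1\vee\dots\vee\chi_n)$; via the deduction lemma this converts into $\Gamma\vdash(\chi_1\vee\dots\vee\chi_n)\to\B_C(\chi_1\vee\dots\vee\chi_n)$. Second, the Truth axiom applied to the hypothesis $\N(\phi\to\chi_1\vee\dots\vee\chi_n)$ yields $\phi\to\chi_1\vee\dots\vee\chi_n$, so chaining with the previous implication gives $\Gamma\vdash\phi\to\B_C(\chi_1\vee\dots\vee\chi_n)$. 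Third, the instance
$$
\N(\phi\to\chi_1\vee\dots\vee\chi_n)\to(\B_C(\chi_1\vee\dots\vee\chi_n)\to(\phi\to\B_C\phi))
$$
of the Blame for Cause axiom, together with the hypothesis $\N(\phi\to\chi_1\vee\dots\vee\chi_n)\in\Gamma$, gives $\Gamma\vdash\B_C(\chi_1\vee\dots\vee\chi_n)\to(\phi\to\B_C\phi)$. Composing the last two implications yields $\Gamma\vdash\phi\to(\phi\to\B_C\phi)$, and simplifying to $\phi\to\B_C\phi$ by propositional reasoning completes the local statement.

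The main step, and the one I expect to be the real obstacle, is the globalization. Applying Lemma~\ref{super distributivity} to the local statement yields
$$
\{\N\cN\B_{D_i}\chi_i\}_{i=1}^n,\N\N(\phi\to\chi_1\vee\dots\vee\chi_n)\vdash\N(\phi\to\B_C\phi),
$$
after which it remains to recover each boxed premise from the original one. The Negative Introspection axiom, instantiated with $\neg\B_{D_i}\chi_i$ in place of $\phi$, gives $\cN\B_{D_i}\chi_i\to\N\cN\B_{D_i}\chi_i$ once $\cN$ is unfolded as $\neg\N\neg$, while Lemma~\ref{positive introspection lemma} gives $\N(\phi\to\chi_1\vee\dots\vee\chi_n)\to\N\N(\phi\to\chi_1\vee\dots\vee\chi_n)$. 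Feeding these into the displayed derivation via Modus Ponens delivers the statement of the lemma. The delicate point is precisely this interplay: the argument goes through only because both the $\cN$-hypotheses and the $\N$-hypothesis are invariant under $\N$ up to provable implication, which is what lets Lemma~\ref{super distributivity} be applied without discarding any premise. The degenerate case $n=0$, where the disjunction collapses to $\bot$, is absorbed by the same scheme, since then $\N(\phi\to\bot)$ already entails $\N(\phi\to\B_C\phi)$ by propositional reasoning.
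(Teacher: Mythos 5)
Your proposal is correct and follows essentially the same route as the paper's own proof: derive the local statement $\phi\to\B_C\phi$ from Lemma~\ref{super joint responsibility lemma}, the Monotonicity axiom, the Truth axiom, and the Blame for Cause axiom, then globalize via Lemma~\ref{super distributivity}, recovering the $\N$-prefixed premises by Negative Introspection (applied to $\neg\B_{D_i}\chi_i$) and Lemma~\ref{positive introspection lemma}. The only differences are cosmetic --- you chain implications where the paper temporarily adds $\phi$ as a hypothesis and invokes the deduction lemma, and your closing aside that $\N(\phi\to\bot)$ entails $\N(\phi\to\B_C\phi)$ ``by propositional reasoning'' strictly also needs Necessitation and Distributivity, though your main scheme already covers $n=0$ uniformly, so nothing is lost.
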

\begin{proof}
By Lemma~\ref{super joint responsibility lemma},
$$
\{\cN\B_{D_i}\chi_i\}_{i=1}^n,\chi_1\vee\dots\vee\chi_n\vdash \B_{D_1\cup\dots\cup D_n}(\chi_1\vee\dots\vee\chi_n).
$$
Thus, by the Monotonicity axiom, 
$$
\{\cN\B_{D_i}\chi_i\}_{i=1}^n,\chi_1\vee\dots\vee\chi_n\vdash \B_{C}(\chi_1\vee\dots\vee\chi_n).
$$
Hence, by the Modus Ponens inference rule
$$
\{\cN\B_{D_i}\chi_i\}_{i=1}^n,\phi,\phi\to\chi_1\vee\dots\vee\chi_n\vdash \B_C(\chi_1\vee\dots\vee\chi_n).
$$
By the Truth axiom and the Modus Ponens inference rule,
$$
\{\cN\B_{D_i}\chi_i\}_{i=1}^n,\phi,\N(\phi\to\chi_1\vee\dots\vee\chi_n)\vdash \B_C(\chi_1\vee\dots\vee\chi_n).
$$
Note that $\N(\phi\to\chi_1\vee\dots\vee\chi_n)\to(\B_C(\chi_1\vee\dots\vee\chi_n)\to(\phi\to\B_C\phi))$ is an instance of the Blame for Cause axiom. Thus, by the Modus Ponens inference rule applied twice,
$$
\{\cN\B_{D_i}\chi_i\}_{i=1}^n,\phi,\N(\phi\to\chi_1\vee\dots\vee\chi_n)\vdash\phi\to\B_C\phi.
$$
By the Modus Ponens inference rule,
$$
\{\cN\B_{D_i}\chi_i\}_{i=1}^n,\phi, \N(\phi\to\chi_1\vee\dots\vee\chi_n)\vdash\B_C\phi.
$$
By the deduction lemma,
$$
\{\cN\B_{D_i}\chi_i\}_{i=1}^n,\N(\phi\to\chi_1\vee\dots\vee\chi_n)\vdash\phi\to\B_C\phi.
$$
By Lemma~\ref{super distributivity},
$$
\{\N\cN\B_{D_i}\chi_i\}_{i=1}^n,\N\N(\phi\to\chi_1\vee\dots\vee\chi_n)\vdash\N(\phi\to\B_C\phi).
$$
By the definition of modality $\cN$, the Negative Introspection axiom, and the Modus Ponens inference rule,
$$
\{\cN\B_{D_i}\chi_i\}_{i=1}^n,\N\N(\phi\to\chi_1\vee\dots\vee\chi_n)\vdash\N(\phi\to\B_C\phi)
$$
% By Lemma~\ref{positive introspection lemma} and the Modus Ponens inference rule,
% $$
% \{\cN\B_{D_i}\chi_i\}_{i=1}^n,\N(\phi\to\chi_1\vee\dots\vee\chi_n)\vdash\N(\phi\to\B_C\phi).
% $$
Therefore, by Lemma~\ref{positive introspection lemma} and the Modus Ponens inference rule, the statement of the lemma follows. 
\end{proof}

\section{Soundness}\label{soundness section}
In the following lemmas, $(\delta,\omega)\in P$ is a play of an arbitrary game $(\Delta,\Omega,P,\pi)$ and $\phi,\psi\in \Phi$ are arbitrary formulae. 

\begin{lemma}
$(\delta,\omega)\nVdash \B_\varnothing\phi$. 
\end{lemma}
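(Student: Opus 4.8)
The plan is to argue directly from item~5 of Definition~\ref{sat}, exploiting the degeneracy of the quantifiers when the coalition is empty. Suppose, toward a contradiction, that $(\delta,\omega)\Vdash\B_\varnothing\phi$. By item~5, this means that $(\delta,\omega)\Vdash\phi$ and that there exists $s\in\Delta^\varnothing$ such that for every play $(\delta',\omega')\in P$ with $s=_\varnothing\delta'$, we have $(\delta',\omega')\nVdash\phi$.

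The crux is to observe what the relation $s=_\varnothing\delta'$ amounts to. By the definition of $=_C$ given before Definition~\ref{sat}, the statement $s=_\varnothing\delta'$ asserts that $s(a)=\delta'(a)$ for every agent $a\in\varnothing$; since the empty coalition contains no agents, this condition holds \emph{vacuously} for every action profile $\delta'$. (Note that $\Delta^\varnothing$ contains exactly one element, the empty function, so the existential quantifier over $s$ is also trivial, but the key point is the vacuity of the constraint $s=_\varnothing\delta'$.) Consequently, the preventing condition in item~5 collapses to the much stronger assertion that $(\delta',\omega')\nVdash\phi$ for \emph{every} play $(\delta',\omega')\in P$, with no restriction at all.

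Now I would instantiate this universal statement at the current play itself. Since $(\delta,\omega)$ is a play in $P$ and $s=_\varnothing\delta$ holds vacuously, the preventing condition yields $(\delta,\omega)\nVdash\phi$. This directly contradicts the first conjunct $(\delta,\omega)\Vdash\phi$ obtained above. Hence the assumption was false, and we conclude $(\delta,\omega)\nVdash\B_\varnothing\phi$, as required.

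I do not anticipate any real obstacle here: the entire argument rests on the single observation that $=_\varnothing$ imposes no constraint, so that the strategy guaranteed by $\B_\varnothing\phi$ would have to falsify $\phi$ on \emph{all} plays, including the one on which $\phi$ is assumed true. This lemma is precisely the semantic counterpart of the None to Blame axiom $\neg\B_\varnothing\phi$, reflecting the intuition that a coalition with no members has no way to steer the play and therefore cannot prevent anything.
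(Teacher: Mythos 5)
Your proof is correct and follows essentially the same route as the paper's: assume $(\delta,\omega)\Vdash\B_\varnothing\phi$, observe that $s=_\varnothing\delta'$ holds vacuously, instantiate the preventing condition at the play $(\delta,\omega)$ itself, and derive the contradiction $(\delta,\omega)\nVdash\phi$. Your additional remarks (uniqueness of the empty function in $\Delta^\varnothing$, the connection to the None to Blame axiom) are accurate but not needed for the argument.
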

\begin{proof}
Suppose that $(\delta,\omega)\Vdash \B_\varnothing\phi$. Thus, by Definition~\ref{sat}, we have $(\delta,\omega)\Vdash \phi$ and there is an action profile $s\in\Delta^\varnothing$ such that for each play $(\delta',\omega')\in P$, if $s=_\varnothing\delta'$, then $(\delta',\omega')\nVdash\phi$.

Consider $\delta'=\delta$ and $\omega'=\omega$. Note that $s=_\varnothing\delta'$ is vacuously true. Hence, $(\delta',\omega')\nVdash\phi$. In other words, $(\delta,\omega)\nVdash\phi$, which leads to a contradiction.
\end{proof}

\begin{lemma}
For all sets $C,D\subseteq\mathcal{A}$ 
such that $C\cap D=\varnothing$, if $(\delta,\omega)\Vdash \cN\B_C\phi$, $(\delta,\omega)\Vdash \cN\B_D\psi$, and $(\delta,\omega)\Vdash \phi\vee\psi$, then $(\delta,\omega)\Vdash \B_{C\cup D}(\phi\vee\psi)$.
\end{lemma}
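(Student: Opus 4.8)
The plan is to construct the witnessing strategy for the coalition $C\cup D$ by gluing together the witnessing strategies for $C$ and for $D$, exploiting the fact that the two coalitions are disjoint.

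First I would unpack the two hypotheses. By the semantics of $\N$ (item~4 of Definition~\ref{sat}) and the definition of $\cN$ as $\neg\N\neg$, the assumption $(\delta,\omega)\Vdash\cN\B_C\phi$ means there is \emph{some} play $(\delta_1,\omega_1)\in P$ with $(\delta_1,\omega_1)\Vdash\B_C\phi$. Applying item~5 of Definition~\ref{sat} to that play yields an action profile $s_C\in\Delta^C$ such that $(\delta',\omega')\nVdash\phi$ for every play $(\delta',\omega')\in P$ with $s_C=_C\delta'$. The particular play $(\delta_1,\omega_1)$ serves only to produce this strategy; the defining property of $s_C$ quantifies over all plays and is therefore independent of where $\B_C\phi$ happened to be witnessed. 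Symmetrically, $(\delta,\omega)\Vdash\cN\B_D\psi$ supplies $s_D\in\Delta^D$ with $(\delta',\omega')\nVdash\psi$ whenever $s_D=_D\delta'$.

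Next I would define the combined profile $s\in\Delta^{C\cup D}$ by setting $s(a)=s_C(a)$ for $a\in C$ and $s(a)=s_D(a)$ for $a\in D$. This is well-defined precisely because $C\cap D=\varnothing$. Then for any play $(\delta',\omega')\in P$ with $s=_{C\cup D}\delta'$, restricting the agreement to $C$ gives $s_C=_C\delta'$, hence $(\delta',\omega')\nVdash\phi$; restricting to $D$ gives $s_D=_D\delta'$, hence $(\delta',\omega')\nVdash\psi$. Therefore $(\delta',\omega')\nVdash\phi\vee\psi$. Since the truth clause $(\delta,\omega)\Vdash\phi\vee\psi$ is exactly the remaining hypothesis, both clauses of item~5 are met with witness $s$, so $(\delta,\omega)\Vdash\B_{C\cup D}(\phi\vee\psi)$, as required.

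The proof is short and essentially a matter of correct bookkeeping, so there is no deep obstacle. The one point that needs care is the handling of $\cN$: one must notice that $\cN\B_C\phi$ delivers a (possibly different) play at which $\B_C\phi$ holds, and that the preventing strategy extracted there works against $\phi$ \emph{uniformly over the whole game}. This is what makes $s_C$ still usable at the current play $(\delta,\omega)$, even though $\B_C\phi$ itself need not hold at $(\delta,\omega)$, and it is what allows $s_C$ and $s_D$ to be combined without any interaction between the two coalitions.
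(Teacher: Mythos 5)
Your proposal is correct and matches the paper's own proof essentially step for step: both extract witnessing strategies $s_C\in\Delta^C$ and $s_D\in\Delta^D$ from the plays supplied by the two $\cN$ hypotheses, glue them into a single profile on $C\cup D$ (well-defined by disjointness), and observe that any play agreeing with it on $C\cup D$ falsifies both $\phi$ and $\psi$, hence $\phi\vee\psi$. Your remark that the preventing strategies quantify over all plays, and so transfer from the witnessing plays back to $(\delta,\omega)$, is exactly the point the paper's argument relies on implicitly.
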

\begin{proof}
Let $(\delta,\omega)\Vdash \cN\B_C\phi$ and $(\delta,\omega)\Vdash \cN\B_D\psi$. Thus, by Definition~\ref{sat} and the definition of modality $\cN$, there are plays $(\delta_1,\omega_1)\in P$ and $(\delta_2,\omega_2)\in P$ such that $(\delta_1,\omega_1)\Vdash \B_C\phi$ and $(\delta_2,\omega_2)\Vdash \B_D\psi$.

By Definition~\ref{sat}, statement $(\delta_1,\omega_1)\Vdash \B_C\phi$ implies that there is $s_1\in \Delta^C$ such that for each play $(\delta',\omega')\in P$, if $s_1=_C\delta'$, then $(\delta',\omega')\nVdash\phi$.

Similarly, by Definition~\ref{sat}, statement $(\delta_2,\omega_2)\Vdash \B_D\psi$ implies that there is $s_2\in \Delta^D$ such that for each play $(\delta',\omega')\in P$, if $s_2=_D\delta'$, then $(\delta',\omega')\nVdash\psi$.

Consider an action profile $s$ of coalition $C\cup D$ such that
$$
s(a)=
\begin{cases}
s_1(a), & \mbox{ if } a\in C,\\
s_2(a), & \mbox{ if } a\in D.
\end{cases}
$$
Note that the action profile $s$ is well-defined because sets $C$ and $D$ are disjoint by the assumption of the lemma. 

The choice of action profiles $s_1$, $s_2$, and $s$ implies that  for each play $(\delta',\omega')\in P$, if $s=_{C\cup D}\delta'$, then $(\delta',\omega')\nVdash\phi$ and $(\delta',\omega')\nVdash\psi$. 
Thus, for each play $(\delta',\omega')\in P$, if $s=_{C\cup D}\delta'$, then $(\delta',\omega')\nVdash\phi\vee\psi$. 
Therefore, $(\delta,\omega)\Vdash \B_{C\cup D}(\phi\vee\psi)$  by Definition~\ref{sat} and due to the assumption $(\delta,\omega)\Vdash \phi\vee\psi$ of the lemma.
\end{proof}

\begin{lemma}
If $(\delta,\omega)\Vdash \N(\phi\to\psi)$, $(\delta,\omega)\Vdash \B_C\psi$, and $(\delta,\omega)\Vdash \phi$, then $(\delta,\omega)\Vdash \B_C\phi$.
\end{lemma}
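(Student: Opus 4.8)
The plan is to unwind Definition~\ref{sat} and observe that the very strategy witnessing $\B_C\psi$ also witnesses $\B_C\phi$, with the universal implication $\N(\phi\to\psi)$ used contrapositively to transfer the prevention property from $\psi$ to $\phi$.

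First I would dispatch the ``truth'' conjunct. By item 5 of Definition~\ref{sat}, establishing $(\delta,\omega)\Vdash\B_C\phi$ requires two things: that $(\delta,\omega)\Vdash\phi$, and that some action profile $s\in\Delta^C$ prevents $\phi$ on every consistent play. The first of these is exactly the hypothesis $(\delta,\omega)\Vdash\phi$, so it is immediate and nothing more need be said about it.

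For the second conjunct, I would extract the witness from $\B_C\psi$. Applying item 5 of Definition~\ref{sat} to the hypothesis $(\delta,\omega)\Vdash\B_C\psi$ yields an action profile $s\in\Delta^C$ such that for every play $(\delta',\omega')\in P$ with $s=_C\delta'$ we have $(\delta',\omega')\nVdash\psi$. The claim is that this same $s$ prevents $\phi$. So I would fix an arbitrary play $(\delta',\omega')\in P$ with $s=_C\delta'$ and argue $(\delta',\omega')\nVdash\phi$: the property of $s$ gives $(\delta',\omega')\nVdash\psi$; the hypothesis $(\delta,\omega)\Vdash\N(\phi\to\psi)$ together with item 4 of Definition~\ref{sat} gives $(\delta',\omega')\Vdash\phi\to\psi$; and then item 3 of Definition~\ref{sat} (the contrapositive of material implication) forces $(\delta',\omega')\nVdash\phi$, since otherwise $\phi\to\psi$ would yield $(\delta',\omega')\Vdash\psi$, contradicting what $s$ guarantees. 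As $(\delta',\omega')$ was arbitrary among plays consistent with $s$ on $C$, the profile $s$ prevents $\phi$.

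Combining the two conjuncts, $(\delta,\omega)\Vdash\B_C\phi$ follows by item 5 of Definition~\ref{sat}. I do not expect a genuine obstacle here, since every step is a direct appeal to the semantic clauses; the only point requiring care is the reuse of the \emph{same} witness $s$ for both modalities, which is what makes the contrapositive transfer go through without constructing any new action profile.
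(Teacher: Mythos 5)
Your proof is correct and follows essentially the same route as the paper's: both reuse the witness profile $s$ from $(\delta,\omega)\Vdash\B_C\psi$ and transfer the prevention property from $\psi$ to $\phi$ via the contrapositive of $\phi\to\psi$, which holds at every play by the semantics of $\N$, then close with the hypothesis $(\delta,\omega)\Vdash\phi$. No gaps; nothing further is needed.
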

\begin{proof}
By Definition~\ref{sat}, assumption $(\delta,\omega)\Vdash \B_C\psi$ implies that there is $s\in \Delta^C$ such that for each play $(\delta',\omega')\in P$, if $s=_C\delta'$, then $(\delta',\omega')\nVdash\psi$. 

At the same time, $(\delta',\omega')\Vdash\phi\to\psi$ for each play $(\delta',\omega')\in P$ by  the assumption $(\delta,\omega)\Vdash \N(\phi\to\psi)$ of the lemma and Definition~\ref{sat}. 

Thus,  $(\delta',\omega')\nVdash\phi$ for each play $(\delta',\omega')\in P$ such that $s=_C\delta'$ by Definition~\ref{sat}. Hence, $(\delta,\omega)\Vdash \B_C\phi$ by Definition~\ref{sat} and the assumption $(\delta,\omega)\Vdash \phi$ of the lemma.
\end{proof}

\begin{lemma}
For all sets $C,D\in\mathcal{A}$ such that $C\subseteq D$, if $(\delta,\omega)\Vdash \B_C\phi$, then $(\delta,\omega)\Vdash \B_D\phi$.
\end{lemma}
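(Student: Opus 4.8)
The plan is to unfold item 5 of Definition~\ref{sat} on both sides and transfer the witnessing strategy from the smaller coalition $C$ to the larger coalition $D$. By assumption $(\delta,\omega)\Vdash\B_C\phi$, so Definition~\ref{sat} gives two facts: first, $(\delta,\omega)\Vdash\phi$; second, there is an action profile $s\in\Delta^C$ such that for every play $(\delta',\omega')\in P$, if $s=_C\delta'$, then $(\delta',\omega')\nVdash\phi$. To establish $(\delta,\omega)\Vdash\B_D\phi$ I must produce the analogous two facts with $D$ in place of $C$. The first, $(\delta,\omega)\Vdash\phi$, is already in hand and carries over verbatim, so all the work is in exhibiting a witnessing profile $s'\in\Delta^D$.

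The key step is to extend $s$ to an action profile $s'$ on the larger set $D$. Since $\Delta$ is nonempty by item~1 of Definition~\ref{game definition}, I would fix any element $d_0\in\Delta$ and define
$$
s'(a)=
\begin{cases}
s(a), & a\in C,\\
d_0, & a\in D\setminus C.
\end{cases}
$$
This is well-defined precisely because $C\subseteq D$, and its values on $D\setminus C$ are irrelevant to the argument; any choice would do.

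The heart of the verification is the observation that matching on $D$ forces matching on $C$. Concretely, suppose $(\delta',\omega')\in P$ with $s'=_D\delta'$. Because $C\subseteq D$, this yields $s'=_C\delta'$, and since $s'$ agrees with $s$ on every agent of $C$, we obtain $s=_C\delta'$. The defining property of $s$ then gives $(\delta',\omega')\nVdash\phi$. Thus $s'$ witnesses exactly the condition required for $\B_D$, and together with $(\delta,\omega)\Vdash\phi$ this yields $(\delta,\omega)\Vdash\B_D\phi$ by Definition~\ref{sat}.

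I do not expect a genuine obstacle here: the argument is a routine strategy-extension, and the only point requiring the slightest care is ensuring $s'$ is a legitimate element of $\Delta^D$ (handled by nonemptiness of $\Delta$) and recording explicitly that agreement on $D$ implies agreement on the subset $C$, which is where the hypothesis $C\subseteq D$ is used.
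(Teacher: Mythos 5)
Your proof is correct and matches the paper's own argument essentially step for step: both unfold item 5 of Definition~\ref{sat}, extend the witnessing profile $s$ to $D$ using an arbitrary default action $d_0\in\Delta$ (available by nonemptiness of $\Delta$ in Definition~\ref{game definition}), and observe that agreement on $D$ implies agreement on the subset $C$. No gaps or differences worth noting.
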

\begin{proof}
By Definition~\ref{sat}, assumption $(\delta,\omega)\Vdash \B_C\phi$ implies that $(\delta,\omega)\Vdash\phi$ and there is $s\in \Delta^C$ such that for each play $(\delta',\omega')\in P$, if $s=_C\delta'$, then $(\delta',\omega')\nVdash\phi$. 

By Definition~\ref{game definition}, set $\Delta$ is not empty. Let $d_0\in\Delta$. Consider an action profile $s'$ of coalition $D$ such that
$$
s'(a)=
\begin{cases}
s(a), & \mbox{ if } a\in C,\\
d_0, & \mbox{ if } a\in D\setminus C.
\end{cases}
$$
Then, by the choice of action profile $s$ and because $C\subseteq D$, for each play $(\delta',\omega')\in P$, if $s'=_D\delta'$, then $(\delta',\omega')\nVdash\phi$. Therefore, $(\delta,\omega)\Vdash \B_D\phi$ by Definition~\ref{sat} and  because  $(\delta,\omega)\Vdash\phi$, as we have shown earlier.
\end{proof}

\begin{lemma}
If $(\delta,\omega)\Vdash \B_C\phi$, then $(\delta,\omega)\Vdash \N(\phi\to\B_C\phi)$.
\end{lemma}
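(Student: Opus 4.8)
The plan is to unfold the semantics of both modalities and to exploit the fact that the strategy witnessing $\B_C\phi$ at $(\delta,\omega)$ is a \emph{global} object, independent of the play at which blameworthiness is evaluated. First I would unpack the hypothesis: by item~5 of Definition~\ref{sat}, the assumption $(\delta,\omega)\Vdash\B_C\phi$ yields (i) that $(\delta,\omega)\Vdash\phi$, and, more importantly, (ii) an action profile $s\in\Delta^C$ such that for every play $(\delta',\omega')\in P$, if $s=_C\delta'$, then $(\delta',\omega')\nVdash\phi$. The crucial observation is that condition (ii) quantifies over \emph{all} plays in $P$ and so makes no reference to the particular play $(\delta,\omega)$.

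Next I would unfold the goal. By item~4 of Definition~\ref{sat}, establishing $(\delta,\omega)\Vdash\N(\phi\to\B_C\phi)$ amounts to showing that $(\delta'',\omega'')\Vdash\phi\to\B_C\phi$ for every play $(\delta'',\omega'')\in P$. I would therefore fix an arbitrary such play, assume $(\delta'',\omega'')\Vdash\phi$ (the case where $\phi$ fails at $(\delta'',\omega'')$ being immediate from item~3), and reduce the problem to deriving $(\delta'',\omega'')\Vdash\B_C\phi$.

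The key step is then to reuse the very same $s$ from (ii) as the witnessing action profile for $\B_C\phi$ at $(\delta'',\omega'')$. The assumption $(\delta'',\omega'')\Vdash\phi$ supplies the first conjunct required by item~5, and condition (ii)---being a statement purely about $P$ and hence unchanged under a change of evaluation play---supplies the second conjunct verbatim. Definition~\ref{sat} then gives $(\delta'',\omega'')\Vdash\B_C\phi$ directly, and discharging the implication together with the universal quantification over plays completes the argument.

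I do not expect a genuine obstacle: the only subtlety is conceptual rather than computational, namely recognizing that the preventing strategy $s$ is a property of the game's play set $P$ and not of the current play. Once this is noticed, the same $s$ transfers without modification to any play at which $\phi$ holds, and the soundness of the Fairness axiom follows at once.
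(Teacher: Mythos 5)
Your proof is correct and follows essentially the same route as the paper's: both unfold item~5 of Definition~\ref{sat}, observe that the witnessing action profile $s$ is a property of the whole play set $P$ rather than of the evaluation play, and transfer the same $s$ verbatim to any play at which $\phi$ holds. No gaps; your version merely spells out the ``globality'' of $s$ more explicitly than the paper does.
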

\begin{proof} 
Consider any play $(\delta',\omega')\in P$. By Definition~\ref{sat}, it suffices to show that if $(\delta',\omega')\Vdash \phi$, then $(\delta',\omega')\Vdash \B_C\phi$. Thus, again by Definition~\ref{sat}, it suffices to prove there is $s\in \Delta^C$ such that for each play $(\delta'',\omega'')\in P$, if $s=_C\delta''$, then $(\delta'',\omega'')\nVdash\phi$. The last statement follows from the assumption $(\delta,\omega)\Vdash \B_C\phi$ and Definition~\ref{sat}.
\end{proof}

\section{Completeness}\label{completeness section}

We start the proof of
the completeness by defining the canonical game $G(\omega_0)=\left(\Delta,\Omega,P,\pi\right)$ for each maximal consistent set of formulae $\omega_0$. 

\begin{definition}\label{canonical outcome}
The set of outcomes $\Omega$ is the set of all maximal consistent sets of formulae $\omega$ such that for each formula $\phi\in\Phi$ if $\N\phi\in \omega_0$, then $\phi\in \omega$.
\end{definition}

% \begin{lemma}\label{standard transporter lemma}
% For any formula $\phi\in\Phi$ and any two outcomes $\omega,\omega'\in \Omega$, if $\N\phi\in\omega$, then $\phi\in\omega'$.
% \end{lemma}
% \begin{proof}
% Assume $\phi\notin\omega'$. Thus,  $\N\phi\notin\omega_0$ by Definition~\ref{canonical outcome}. Then, $\neg\N\phi\in\omega_0$ by the maximality of set $\omega_0$. Hence, $\omega_0\vdash\N\neg\N\phi$ by the Negative Introspection axiom. Therefore, $\N\neg\N\phi\in\omega_0$ by the maximality of set $\omega_0$. Again by Definition~\ref{canonical outcome}, it follows that $\neg\N\phi\in\omega$, which contradicts the assumption of the lemma and the consistency of set $\omega$. 
% \end{proof}

% \begin{lemma}\label{transporter lemma}
% For any formula $\phi\in\Phi$ and any two outcomes $\omega,\omega'\in \Omega$, if $\phi\in\omega$, then $\cN\phi\in\omega'$.
% \end{lemma}
% \begin{proof}
% Suppose $\cN\phi\notin\omega'$. Thus, $\neg\N\neg\phi\notin\omega'$ by the definition of modality $\cN$. Hence, $\N\neg\phi\in\omega'$ by the maximality of the set $\omega'$. Then,   $\neg\phi\in\omega$ by Lemma~\ref{
%{\color{magenta}lemma}. Therefore, $\phi\notin\omega$ due to the consistency of the set $\omega$. 
% \end{proof}

Informally, an action of an agent in the canonical game is designed to ``veto" a formula.
The domain of choices of the canonical model consists of all formulae in set $\Phi$. To veto a formula $\psi$, an agent must choose action $\psi$.  The mechanism of the canonical game guarantees that if $\cN\B_C\psi\in \omega_0$ and all agents in the coalition $C$ veto formula $\psi$, then $\neg\psi$ is satisfied in the outcome.

\begin{definition}
The domain of actions $\Delta$ is set $\Phi$.
\end{definition}

\begin{definition}\label{canonical play}
The set $P\subseteq \Delta^\mathcal{A}\times \Omega$ consists of all pairs $(\delta,\omega)$ such that for  any formula $\cN\B_C\psi\in \omega_0$, if $\delta(a)=\psi$ for each agent $a\in C$, then $\neg\psi\in \omega$.
\end{definition}

\begin{definition}\label{canonical pi}
$\pi(p)=\{(\delta,\omega)\in P\;|\; p\in \omega\}$.
\end{definition}

This concludes the definition of the canonical game $G(\omega_0)$. The next four lemmas are auxiliary results leading to the proof of the completeness in Theorem~\ref{completeness theorem}.

\begin{lemma}\label{B child exists lemma}
For any play $(\delta,\omega)\in P$, any action profile $s\in\Delta^C$, and any formula $\neg(\phi\to \B_C\phi)\in \omega$, there is a play $(\delta',\omega')\in P$ such that $s =_C\delta'$ and $\phi\in \omega'$.
\end{lemma}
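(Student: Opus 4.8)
The plan is to construct the required play $(\delta',\omega')$ by first fixing the action profile $\delta'$ and then obtaining the outcome $\omega'$ as a maximal consistent extension of a carefully chosen set of formulae, the consistency of which I would verify using Lemma~\ref{five plus plus}. First I would unpack the hypothesis: since $\neg(\phi\to\B_C\phi)\in\omega$ and $\omega$ is maximal consistent, propositional reasoning gives $\phi\in\omega$ and $\neg\B_C\phi\in\omega$. The agreement requirement $s=_C\delta'$ forces $\delta'(a)=s(a)$ for every $a\in C$, so the only freedom lies in the actions of agents outside $C$; for these I would set $\delta'(a)=\bot$, a choice whose purpose becomes clear in the consistency argument.

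Next I would define the set $X$ consisting of $\phi$, of every $\chi$ with $\N\chi\in\omega_0$, and of every $\neg\psi$ for which there is a coalition $D$ with $\cN\B_D\psi\in\omega_0$ and $\delta'(a)=\psi$ for all $a\in D$. These three parts are exactly what a maximal consistent set $\omega'\supseteq X$ needs in order to lie in $\Omega$ (Definition~\ref{canonical outcome}), to satisfy $(\delta',\omega')\in P$ (Definition~\ref{canonical play}), and to contain $\phi$. Thus once $X$ is shown to be consistent, Lindenbaum's lemma produces an $\omega'$ for which all the desired conclusions, namely $(\delta',\omega')\in P$, $s=_C\delta'$, and $\phi\in\omega'$, hold immediately.

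The heart of the argument is the consistency of $X$, and this is where the choice $\delta'(a)=\bot$ pays off. Assuming $X$ inconsistent, I would take a finite witness involving $\phi$, formulae $\chi_1,\dots,\chi_k$ with $\N\chi_i\in\omega_0$, and negations $\neg\psi_1,\dots,\neg\psi_m$ with the $\psi_j$ taken distinct, each arising from a coalition $D_j$ as in the definition of $X$. Two observations pin down these coalitions. First, every $D_j$ is a nonempty subset of $C$: if $D_j=\varnothing$ or $\psi_j=\bot$, then $\cN\B_{D_j}\psi_j$ would be inconsistent by None to Blame, respectively by the Truth axiom together with Necessitation, so in fact $\psi_j\neq\bot$ and each agent of $D_j$ plays $\psi_j\neq\bot$, hence lies in $C$. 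Second, since the $\psi_j$ are distinct and each agent has a single action under $\delta'$, the sets $\{a:\delta'(a)=\psi_j\}$ are pairwise disjoint, and therefore so are $D_1,\dots,D_m$. From the assumed inconsistency, propositional reasoning yields $\chi_1,\dots,\chi_k\vdash\phi\to(\psi_1\vee\dots\vee\psi_m)$, whence $\N(\phi\to\psi_1\vee\dots\vee\psi_m)\in\omega_0$ by Lemma~\ref{super distributivity} and the deductive closure of $\omega_0$. Now Lemma~\ref{five plus plus}, applicable precisely because the $D_j$ are pairwise disjoint subsets of $C$, gives $\N(\phi\to\B_C\phi)\in\omega_0$, so $\phi\to\B_C\phi\in\omega$ by Definition~\ref{canonical outcome}, contradicting $\neg(\phi\to\B_C\phi)\in\omega$.

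I expect the main obstacle to be exactly this bookkeeping: guaranteeing that the $P$-constraints imposed on $\omega'$ stem only from pairwise disjoint coalitions contained in $C$, so that Lemma~\ref{five plus plus} can legitimately be invoked. Letting every agent outside $C$ play $\bot$ is what removes all spurious constraints, since no $\cN\B_D\bot$ can belong to $\omega_0$, while keeping each genuine constraint inside $C$; the disjointness then comes for free from the observation that distinct vetoed formulae correspond to disjoint groups of agents.
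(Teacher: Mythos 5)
Your proposal is correct and follows essentially the same route as the paper: the same completion $\delta'$ of $s$ by $\bot$ outside $C$, the same three-part set $X$, consistency via Lemma~\ref{super distributivity} followed by Lemma~\ref{five plus plus} after observing that distinct vetoed formulae yield pairwise disjoint coalitions, and Lindenbaum to obtain $\omega'$. The only (cosmetic) difference is bookkeeping: you define the third component of $X$ via the full profile $\delta'$ and rule out the spurious constraints (those with $D\nsubseteq C$ or $D=\varnothing$, via the Truth and None to Blame axioms with Necessitation) inside the consistency argument, whereas the paper restricts to $D\subseteq C$ up front and discharges the $D\nsubseteq C$ case, using the tautology $\neg\bot$, in a separate claim verifying $(\delta',\omega')\in P$.
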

\begin{proof}
Consider the following set of formulae:
\begin{eqnarray*}
X&\!\!=\!\!&\!\{\phi\}\;\cup\;\{\psi\;|\;\N\psi\in \omega_0\}\\
&&\!\cup\;\{\neg\chi\;|\;\cN\B_D\chi\in \omega_0, D\subseteq C,\forall a\in D(s(a)=\chi)\}.
\end{eqnarray*}
\begin{claim}
Set $X$ is consistent.
\end{claim}
\begin{proof-of-claim}
Suppose the opposite. Thus, there are 
\begin{eqnarray}
\mbox{ formulae }&&\N\psi_1,\dots,\N\psi_m\in \omega_0,\label{choice of psi-s}\\
\mbox{and formulae }&&\cN\B_{D_1}\chi_1,\dots,\cN\B_{D_n}\chi_n\in \omega_0,\label{choice of chi-s}\\
% \end{eqnarray}
% \begin{eqnarray}
\mbox{such that }&&D_1,\dots,D_n\subseteq C,\label{choice of Ds}\\
&&s(a)=\chi_i\mbox{ for all } a\in D_i, i\le n\label{choice of votes},\\
\mbox{ and }&&\psi_1,\dots,\psi_m,\neg\chi_1,\dots,\neg\chi_n\vdash\neg\phi.\label{choice of cons}
\end{eqnarray}
Without loss of generality, we can assume that formulae $\chi_1,\dots,\chi_n$ are distinct. Thus, assumption~(\ref{choice of votes}) implies that sets $D_1,\dots,D_n$ are pairwise disjoint. 

By propositional reasoning, assumption~(\ref{choice of cons}) implies that
$$
\psi_1,\dots,\psi_m\vdash\phi\to\chi_1\vee\dots\vee\chi_n.
$$
Thus, by Lemma~\ref{super distributivity},
$$
\N\psi_1,\dots,\N\psi_m\vdash\N(\phi\to\chi_1\vee\dots\vee\chi_n).
$$
Hence, by assumption~(\ref{choice of psi-s}),
$$
    \omega_0\vdash\N(\phi\to\chi_1\vee\dots\vee\chi_n).
$$
Thus, by  Lemma~\ref{five plus plus}, using assumptions~(\ref{choice of chi-s}) and the fact that sets $D_1,\dots,D_n$ are pairwise disjoint,
$$
\omega_0\vdash \N(\phi\to\B_C\phi).
$$
Hence $\N(\phi\to\B_C)\in \omega_0$ because set $\omega_0$ is maximal. Then, $\phi\to\B_C\in \omega$ by Definition~\ref{canonical outcome}, which contradicts the assumption  $\neg(\phi\to\B_C)\in \omega$ of the lemma because set $\omega$ is consistent.
Therefore, set $X$ is consistent.
\end{proof-of-claim}

Let $\omega'$ be any maximal consistent extension of set $X$. Thus, $\phi\in X\subseteq\omega'$ by the choice of sets $X$ and $\omega'$. Also, $\omega'\in\Omega$ by Definition~\ref{canonical outcome} and the choice of sets $X$ and $\omega'$.

Let the complete action profile $\delta'$ be defined as follows:
\begin{equation}\label{choice of delta'}
    \delta'(a)=
    \begin{cases}
    s(a), & \mbox{ if } a\in C,\\
    \bot, & \mbox{ otherwise}.
    \end{cases}
\end{equation}
Then, $s=_C\delta'$.

\begin{claim}
$(\delta',\omega')\in P$.
\end{claim}
\begin{proof-of-claim}
Consider any formula $\cN\B_D\chi\in \omega_0$ such that $\delta'(a)=\chi$ for each $a\in D$. By Definition~\ref{canonical play}, it suffices to show that $\neg\chi\in \omega'$. 

\noindent{\bf Case I:} $D\subseteq C$. Thus, $\neg\chi\in X$ by the definition of set $X$. Therefore, $\neg\chi\in \omega'$ by the choice of set $\omega'$.

\noindent{\bf Case II:} $D\nsubseteq C$. Consider any $d_0\in D\setminus C$. Thus, $\delta'(d_0)=\bot$ by equation~(\ref{choice of delta'}). Also, $\delta'(d_0)=\chi$ by the choice of formula $\cN\B_D\chi$. Thus, $\chi\equiv \bot$ and formula $\neg\chi$ is a tautology. Hence, $\neg\chi\in \omega'$ by the maximality of set $\omega'$. 
\end{proof-of-claim}

This concludes the proof of the lemma.
\end{proof}

\begin{lemma}\label{delta exists lemma}
For any outcome $\omega\in\Omega$, there is a complete action profile $\delta\in \Delta^\mathcal{A}$ such that $(\delta,\omega)\in P$.
\end{lemma}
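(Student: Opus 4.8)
The plan is to exhibit a single complete action profile that forms a play with the given outcome, namely the profile in which every agent vetoes the constant $\bot$. Concretely, I would define $\delta\in\Delta^\mathcal{A}$ by $\delta(a)=\bot$ for every agent $a\in\mathcal{A}$, which is legitimate since $\bot\in\Phi=\Delta$. It then remains to verify directly from Definition~\ref{canonical play} that $(\delta,\omega)\in P$. To this end I would take an arbitrary formula $\cN\B_C\psi\in\omega_0$, assume that $\delta(a)=\psi$ for each $a\in C$, and aim to derive $\neg\psi\in\omega$, splitting the argument according to whether $C$ is empty.

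The nonempty case is immediate. If $C\neq\varnothing$, then picking any $a\in C$ gives $\psi=\delta(a)=\bot$, so $\neg\psi=\neg\bot$ is a propositional tautology and hence belongs to the maximal consistent set $\omega$. In other words, the uniform $\bot$-veto is arranged precisely so that for every nonempty coalition the antecedent of the condition in Definition~\ref{canonical play} can fire only when $\psi=\bot$, in which case the required consequent $\neg\psi$ holds for free.

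The only delicate point, and the step I expect to be the main obstacle, is the case $C=\varnothing$: there the antecedent ``$\delta(a)=\psi$ for each $a\in C$'' is vacuously true for an arbitrary $\psi$, so it cannot be neutralized by the choice of $\delta$, and one cannot in general guarantee $\neg\psi\in\omega$. I would dispose of this case by ruling it out entirely using the None to Blame axiom: from $\vdash\neg\B_\varnothing\psi$ the Necessitation rule yields $\vdash\N\neg\B_\varnothing\psi$, which by the definition of $\cN$ is exactly $\vdash\neg\cN\B_\varnothing\psi$; hence $\cN\B_\varnothing\psi\notin\omega_0$ by the consistency of $\omega_0$. Consequently Definition~\ref{canonical play} never imposes any empty-coalition constraint, and the two cases together establish $(\delta,\omega)\in P$, completing the proof. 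The whole argument is short once one sees that the empty coalition is the sole genuine concern, everything else being handled uniformly by the $\bot$-veto.
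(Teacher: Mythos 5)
Your proof is correct and matches the paper's argument essentially verbatim: the same uniform $\bot$-profile, the same nonempty-coalition case via the tautology $\neg\bot$ and maximality of $\omega$, and the same treatment of the empty coalition by deriving $\neg\cN\B_\varnothing\psi$ from the None to Blame axiom and Necessitation to show such formulae cannot occur in $\omega_0$. No gaps; you also correctly identified the empty-coalition case as the only delicate point, exactly as in the paper's Case~I.
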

\begin{proof}
Define a complete action profile $\delta$ such that $\delta(a)=\bot$ for each agent $a\in \mathcal{A}$. To prove $(\delta,\omega)\in P$, consider any formula $\cN\B_D\chi\in \omega_0$ 
such that $\delta(a)=\chi$ for each $a\in D$. By Definition~\ref{canonical play}, it suffices to show that $\neg\chi\in \omega$. 

\noindent{\bf Case I}: $D=\varnothing$. Thus, $\vdash\neg\B_D\chi$ by the None to Blame axiom. Hence, $\vdash\N\neg\B_D\chi$ by the Necessitation inference rule. Then, $\neg\N\neg\B_D\chi\notin\omega_0$ by the consistency of the set $\omega_0$. Therefore, $\cN\B_D\chi\notin\omega_0$ by the definition of the modality $\cN$, which contradicts the choice of formula $\cN\B_D\chi$. 

\noindent{\bf Case II}: $D\neq\varnothing$. Thus, there is at least one agent $d_0\in D$. Hence, $\chi=\delta(d_0)=\bot$ by the choice of formula $\cN\B_D\chi$ and the definition of the complete action profile $\delta$. Then, $\neg\chi$ is a tautology. Thus, $\neg\chi\in \omega$ by the maximality of set $\omega$.
\end{proof}

\begin{lemma}\label{N child exists lemma}
For any play $(\delta,\omega)\in P$ and any formula $\neg\N\phi\in \omega$,  there is a play $(\delta',\omega')\in P$ such that $\neg\phi\in \omega'$.
\end{lemma}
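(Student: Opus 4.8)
The plan is to split the task into two halves: first construct an outcome $\omega'\in\Omega$ that contains $\neg\phi$, and then produce a matching complete action profile. The second half is already available: Lemma~\ref{delta exists lemma} guarantees that for any $\omega'\in\Omega$ there is a complete action profile $\delta'$ with $(\delta',\omega')\in P$. So once $\omega'$ is built, the play $(\delta',\omega')$ is immediate and the entire difficulty is concentrated in constructing $\omega'$.

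To build $\omega'$ I would consider the set
$$
X=\{\neg\phi\}\cup\{\psi\mid \N\psi\in\omega_0\},
$$
and take $\omega'$ to be any maximal consistent extension of $X$ (once $X$ is shown consistent). Such an $\omega'$ automatically lies in $\Omega$ by Definition~\ref{canonical outcome}, since it contains every $\psi$ with $\N\psi\in\omega_0$, and it satisfies $\neg\phi\in X\subseteq\omega'$, as required.

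The heart of the argument, and the step I expect to be the main obstacle, is proving that $X$ is consistent. I would argue by contradiction: if $X$ is inconsistent, then finitely many formulae $\psi_1,\dots,\psi_m$ with each $\N\psi_i\in\omega_0$ satisfy $\psi_1,\dots,\psi_m\vdash\phi$. By Lemma~\ref{super distributivity} this lifts to $\N\psi_1,\dots,\N\psi_m\vdash\N\phi$, so $\omega_0\vdash\N\phi$ and hence $\N\phi\in\omega_0$ by the maximality of $\omega_0$. The nonobvious move is then to transport $\N\phi$ into the outcome $\omega$: since Definition~\ref{canonical outcome} only transports a formula $\chi$ into $\omega$ when $\N\chi\in\omega_0$, I first apply positive introspection (Lemma~\ref{positive introspection lemma}) to upgrade $\N\phi\in\omega_0$ to $\N\N\phi\in\omega_0$. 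Applying Definition~\ref{canonical outcome} with $\N\phi$ in place of $\psi$ then yields $\N\phi\in\omega$.

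This contradicts the hypothesis $\neg\N\phi\in\omega$ together with the consistency of $\omega$, establishing that $X$ is consistent. With $\omega'$ in hand and Lemma~\ref{delta exists lemma} supplying the action profile $\delta'$, we obtain the desired play $(\delta',\omega')\in P$ with $\neg\phi\in\omega'$. The only genuinely delicate point is the positive-introspection step; the rest is the standard maximal-consistent-extension machinery shared with the earlier child-existence lemmas.
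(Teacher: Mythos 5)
Your proof is correct and follows the paper's own argument essentially line for line: the same set $X=\{\neg\phi\}\cup\{\psi\mid\N\psi\in\omega_0\}$, the same consistency argument via Lemma~\ref{super distributivity} together with the positive-introspection upgrade of Lemma~\ref{positive introspection lemma}, and the same appeal to Lemma~\ref{delta exists lemma} to supply the action profile. The step you single out as delicate---deriving $\N\N\phi\in\omega_0$ so that Definition~\ref{canonical outcome} transports $\N\phi$ into $\omega$---is exactly the move the paper makes.
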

\begin{proof}
Consider the set $X=\{\neg\phi\}\;\cup\;\{\psi\;|\;\N\psi\in \omega_0\}$. First, we show that set $X$ is consistent. Suppose the opposite. Thus, there are formulae  $\N\psi_1,\dots,\N\psi_n\in \omega_0$
such that
$
\psi_1,\dots,\psi_n\vdash\phi.
$
Hence, 
$
\N\psi_1,\dots,\N\psi_n\vdash\N\phi
$
by Lemma~\ref{super distributivity}.
Thus, $\omega_0\vdash\N\phi$ because $\N\psi_1,\dots,\N\psi_n\in \omega_0$. Hence,
$\omega_0\vdash\N\N\phi$ by Lemma~\ref{positive introspection lemma}.
Therefore, $\N\phi\in\omega$ by assumption $\omega\in\Omega$ and Definition~\ref{canonical outcome}. Hence, $\neg\N\phi\notin\omega$ by the consistency of set $\omega$, which contradicts the assumption of the lemma. Thus, set $X$ is consistent.

Let $\omega'$ be any maximal consistent extension of set $X$. Note that $\neg\phi\in X\subseteq\omega'$ by the definition of set $X$. By Lemma~\ref{delta exists lemma}, there is a complete action profile $\delta'$ such that $(\delta',\omega')\in P$.
\end{proof}

\begin{lemma}\label{induction lemma}
$(\delta,\omega)\Vdash\phi$ iff $\phi\in\omega$ for each play $(\delta,\omega)\in P$ and each formula $\phi\in\Phi$.
\end{lemma}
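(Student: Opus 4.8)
The plan is to prove the statement by induction on the structural complexity of $\phi$, simultaneously for all plays $(\delta,\omega)\in P$. The atomic case is immediate from Definition~\ref{canonical pi}, since $(\delta,\omega)\Vdash p$ means $(\delta,\omega)\in\pi(p)$, which by definition holds exactly when $p\in\omega$. The Boolean cases for negation and implication follow routinely from the maximality and consistency of $\omega$ together with the induction hypothesis, exactly as in any canonical model argument.

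For the modality $\N$, I would first record the auxiliary observation that $\N\psi\in\omega$ iff $\N\psi\in\omega_0$ for every outcome $\omega\in\Omega$: the forward direction uses the Negative Introspection axiom ($\neg\N\psi\to\N\neg\N\psi\in\omega_0$) and then Definition~\ref{canonical outcome}, while the converse uses positive introspection (Lemma~\ref{positive introspection lemma}) to push $\N\N\psi$ into $\omega_0$ and again Definition~\ref{canonical outcome}. Granting this, if $\N\phi\in\omega$ then $\N\phi\in\omega_0$, so $\phi\in\omega'$ for every play $(\delta',\omega')\in P$ by Definition~\ref{canonical outcome}; the induction hypothesis then yields $(\delta',\omega')\Vdash\phi$ for all such plays, i.e.\ $(\delta,\omega)\Vdash\N\phi$. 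Conversely, if $\N\phi\notin\omega$ then $\neg\N\phi\in\omega$, and Lemma~\ref{N child exists lemma} produces a play $(\delta',\omega')\in P$ with $\neg\phi\in\omega'$; by the induction hypothesis $(\delta',\omega')\nVdash\phi$, so $(\delta,\omega)\nVdash\N\phi$.

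The substantive case is $\B_C$. For the direction $\B_C\phi\in\omega\Rightarrow(\delta,\omega)\Vdash\B_C\phi$, I would first get $\phi\in\omega$ from the Truth axiom, hence $(\delta,\omega)\Vdash\phi$ by the induction hypothesis, and then exhibit the veto profile $s\in\Delta^C$ defined by $s(a)=\phi$ for all $a\in C$. The key intermediate step is that $\B_C\phi\in\omega$ forces $\cN\B_C\phi\in\omega_0$: if instead $\N\neg\B_C\phi\in\omega_0$, then Definition~\ref{canonical outcome} would give $\neg\B_C\phi\in\omega$, contradicting consistency, so by maximality $\neg\N\neg\B_C\phi\in\omega_0$, that is $\cN\B_C\phi\in\omega_0$. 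With $\cN\B_C\phi\in\omega_0$ in hand, Definition~\ref{canonical play} guarantees that every play $(\delta',\omega')\in P$ with $s=_C\delta'$ (i.e.\ $\delta'(a)=\phi$ on $C$) satisfies $\neg\phi\in\omega'$, whence $\phi\notin\omega'$ and $(\delta',\omega')\nVdash\phi$ by the induction hypothesis; this is exactly the veto condition of item~5 of Definition~\ref{sat}. For the converse I would argue contrapositively: if $\B_C\phi\notin\omega$, then either $\phi\notin\omega$, giving $(\delta,\omega)\nVdash\phi$ directly, or $\phi\in\omega$ while $\neg(\phi\to\B_C\phi)\in\omega$. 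In the latter case Lemma~\ref{B child exists lemma}, applied to an arbitrary candidate profile $s\in\Delta^C$, supplies a play $(\delta',\omega')\in P$ with $s=_C\delta'$ and $\phi\in\omega'$, so by the induction hypothesis no profile vetoes $\phi$ and item~5 of Definition~\ref{sat} fails; hence $(\delta,\omega)\nVdash\B_C\phi$.

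The main obstacle is the $\B_C$ case, and within it the correct bookkeeping between $\omega$ and the fixed base outcome $\omega_0$: the veto construction works only because blameworthiness in $\omega$ can be transferred to the statement $\cN\B_C\phi\in\omega_0$ that drives the mechanism in Definition~\ref{canonical play}, and because Lemma~\ref{B child exists lemma} rules out any veto whenever $\neg(\phi\to\B_C\phi)\in\omega$. Once these two facts are assembled with the right choice of veto profile $s(a)=\phi$, the case closes and the induction is complete.
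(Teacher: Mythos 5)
Your proof is correct and takes essentially the same approach as the paper's: the same structural induction, the same veto profile $s(a)=\phi$, the same transfer of $\B_C\phi\in\omega$ to $\cN\B_C\phi\in\omega_0$ via Definition~\ref{canonical outcome} and maximality, and the same appeals to Lemmas~\ref{B child exists lemma} and~\ref{N child exists lemma} for the negative directions. The only cosmetic difference is that you package the $\N$ case around a standalone biconditional $\N\psi\in\omega$ iff $\N\psi\in\omega_0$ (using Negative Introspection one way and Lemma~\ref{positive introspection lemma} the other), whereas the paper proves just the directions it needs inline.
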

\begin{proof}
We prove the lemma by structural induction on formula $\phi$. If $\phi$ is a propositional variable, then the required follows from Definition~\ref{sat} and Definition~\ref{canonical pi}. The cases when $\phi$ is an implication or a negation follow from the maximality and the consistency of set $\omega$ in the standard way.

Suppose that formula $\phi$ has the form $\N\psi$.

\noindent $(\Rightarrow):$ Let $\N\psi\notin\omega$. Thus, $\neg\N\psi\in\omega$ by the maximality of set $\omega$. Hence, by Lemma~\ref{N child exists lemma}, there is a play $(\delta',\omega')\in P$ such that $\neg\psi\in \omega'$. Then, $\psi\notin \omega'$ by the consistency of set $\omega'$. Thus, $(\delta',\omega')\nVdash\psi$ by the induction hypothesis. Therefore, $(\delta,\omega)\nVdash\N\psi$ by Definition~\ref{sat}.

\vspace{1mm}

\noindent $(\Leftarrow):$ Let $\N\psi\in\omega$. Thus, $\neg\N\psi\notin\omega$ by the consistency of set $\omega$. Hence, $\N\neg\N\psi\notin\omega_0$ by Definition~\ref{canonical outcome}. Then, $\omega_0\nvdash \N\neg\N\psi$ by the maximality of set $\omega_0$. Thus, $\omega_0\nvdash\neg\N\psi$ by the Negative Introspection axiom. Hence, $\N\psi\in \omega_0$ by the maximality of set $\omega_0$. Then, $\psi\in \omega'$ for each $\omega'\in\Omega$ by Definition~\ref{canonical outcome}. Thus, by the induction hypothesis, $(\delta',\omega')\Vdash\psi$ for each $(\delta',\omega')\in P$. Therefore, $(\delta,\omega)\Vdash\N\psi$ by Definition~\ref{sat}. 

Suppose that formula $\phi$ has the form $\B_C\psi$. 

\noindent $(\Rightarrow):$ Assume 
that $\B_C\psi\notin \omega$. First, we consider the case when $\psi\notin \omega$. Then, $(\delta,\omega)\nVdash\psi$ by the induction hypothesis. Hence, $(\delta,\omega)\nVdash\B_C\psi$ by Definition~\ref{sat}. 

Next, assume that $\psi\in \omega$. Note that $\psi\to\B_C\psi\notin \omega$. Indeed, if $\psi\to\B_C\psi\in \omega$, then $\omega\vdash \B_C\psi$ by the Modus Ponens inference rule. Thus, $\B_C\psi\in \omega$ by the  maximality of set $\omega$, which contradicts the assumption above.

Because $\omega$ is a maximal set, statement $\psi\to\B_C\psi\notin \omega$ implies that $\neg(\psi\to\B_C\psi)\in \omega$. Thus, by Lemma~\ref{B child exists lemma}, for any action profile $s\in \Delta^C$, there is a play $(\delta',\omega')$ such that $\psi\in \omega'$. Hence, by the induction hypothesis, for any action profile $s\in \Delta^C$ there is a play $(\delta',\omega')$ such that $(\delta',\omega')\Vdash \psi$. Therefore, $(\delta,\omega)\nVdash\B_C\psi$ by Definition~\ref{sat}.

\vspace{1mm}
\noindent $(\Leftarrow):$ Suppose that $\B_C\psi\in \omega$. Thus, $\omega\vdash\psi$ by the Truth axiom. Hence, $\psi\in\omega$ by the maximality of the set $\omega$. Thus, $(\delta,\omega)\Vdash\psi$ by the induction hypothesis.

Next, define an action profile $s\in \Delta^C$ to be such that $s(a)=\psi$ for each $a\in C$. Consider any play $(\delta',\omega')\in P$ such that $s=_C\delta'$. By Definition~\ref{sat}, it suffices to show that  $(\delta',\omega')\nVdash \psi$.

Statement $\B_C\psi\in \omega$ implies that $\neg\B_C\psi\notin \omega$ because set $\omega$ is consistent. Thus, $\N\neg\B_C\psi\notin \omega_0$ by Definition~\ref{canonical outcome} and because $\omega\in\Omega$. Hence, $\neg\N\neg\B_C\psi\in \omega_0$ due to the maximality of the set $\omega_0$. Thus, $\cN\B_C\psi\in \omega_0$ by the definition of modality $\cN$.
% By Lemma~\ref{transporter lemma}, statement $\B_C\psi\in \omega$ implies that $\cN\B_C\psi\in\omega'$.
Also, $\delta'(a)=s(a)=\psi$ for each $a\in C$. Hence, $\neg\psi\in\omega'$ by Definition~\ref{canonical play} and the assumption $(\delta',\omega')\in P$. Then, $\psi\notin\omega'$ by the consistency of set $\omega'$. Therefore, $(\delta',\omega')\nVdash \psi$ by the induction hypothesis.
\end{proof}

We are now ready to state and prove the strong completeness of our logical system.
\begin{theorem}\label{completeness theorem}
If $X\nvdash\phi$, then there is a game, a complete action profile $\delta$, and an outcome $\omega$ of this game such that $(\delta,\omega)\Vdash\chi$ for each $\chi\in X$ and $(\delta,\omega)\nVdash\phi$.
\end{theorem}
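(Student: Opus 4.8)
The plan is to run the canonical-model argument in the standard Lindenbaum style, taking advantage of the fact that all the genuinely hard work has already been discharged in the auxiliary lemmas. First I would reduce the claim to a consistency statement: from the hypothesis $X\nvdash\phi$, propositional reasoning together with the deduction lemma shows that the set $X\cup\{\neg\phi\}$ is consistent, for otherwise $X,\neg\phi\vdash\bot$ would give $X\vdash\phi$. I would then invoke Lindenbaum's lemma to extend $X\cup\{\neg\phi\}$ to a maximal consistent set $\omega_0$, and use exactly this $\omega_0$ to instantiate the canonical game $G(\omega_0)=(\Delta,\Omega,P,\pi)$ defined earlier in this section.

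The next step is to exhibit the witnessing play inside $G(\omega_0)$. I would first check that $\omega_0$ is itself an outcome, i.e.\ $\omega_0\in\Omega$: by Definition~\ref{canonical outcome} this requires that $\N\psi\in\omega_0$ imply $\psi\in\omega_0$, which is immediate from the Truth axiom $\N\psi\to\psi$ and the maximality of $\omega_0$. Having placed $\omega_0$ in $\Omega$, I would apply Lemma~\ref{delta exists lemma} to obtain a complete action profile $\delta\in\Delta^{\mathcal{A}}$ with $(\delta,\omega_0)\in P$, so that $(\delta,\omega_0)$ is a genuine play of the canonical game.

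Finally I would transfer membership in $\omega_0$ to satisfaction at $(\delta,\omega_0)$ using the truth lemma, Lemma~\ref{induction lemma}, which asserts $(\delta,\omega_0)\Vdash\chi$ iff $\chi\in\omega_0$. Since $X\subseteq\omega_0$, this yields $(\delta,\omega_0)\Vdash\chi$ for every $\chi\in X$; and since $\neg\phi\in\omega_0$, consistency of $\omega_0$ gives $\phi\notin\omega_0$, whence $(\delta,\omega_0)\nVdash\phi$. This produces the required game, profile, and outcome and completes the proof.

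The assembly itself is routine; the real substance of the completeness argument lives upstream, in the truth lemma and in particular in its $\B_C$ case, which rests on Lemma~\ref{B child exists lemma} (the existence of a veto-consistent child play for every action profile $s\in\Delta^C$ when $\neg(\phi\to\B_C\phi)\in\omega$) and thereby on Lemma~\ref{five plus plus}. Consequently I expect no serious obstacle at the level of the theorem proper; the only points demanding care are verifying $\omega_0\in\Omega$ from the Truth axiom and correctly citing Lemma~\ref{induction lemma} in both directions, so that the negative conclusion $(\delta,\omega_0)\nVdash\phi$ is licensed.
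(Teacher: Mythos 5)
Your proposal is correct and follows essentially the same route as the paper's own proof: extend $X\cup\{\neg\phi\}$ to a maximal consistent set $\omega_0$, verify $\omega_0\in\Omega$ via the Truth axiom and Definition~\ref{canonical outcome}, obtain the play $(\delta,\omega_0)\in P$ from Lemma~\ref{delta exists lemma}, and conclude via Lemma~\ref{induction lemma}. Your added remarks (the deduction-lemma justification of consistency and the observation that the substance lies upstream in the truth lemma's $\B_C$ case) are accurate elaborations, not deviations.
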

\begin{proof}
Suppose that $X\nvdash\phi$. Thus, set $X\cup\{\neg\phi\}$ is consistent. Let $\omega_0$ be any maximal consistent extension of set $X\cup\{\neg\phi\}$ and $G(\omega_0)=(\Delta,\Omega,P,\pi)$ be the canonical game defined above. Note that $\omega_0\in \Omega$ by Definition~\ref{canonical outcome} and the Truth axiom. 

By Lemma~\ref{delta exists lemma}, there exists a complete action profile $\delta\in \Delta^\mathcal{A}$ such that $(\delta,\omega_0)\in P$. Thus, $(\delta,\omega_0)\Vdash\chi$ for each $\chi\in X$ and $(\delta,\omega_0)\Vdash\neg\phi$ by Lemma~\ref{induction lemma} and the choice of set $\omega_0$. Therefore,  $(\delta,\omega_0)\nVdash\phi$ by Definition~\ref{sat}.
\end{proof}

\section{Conclusion}

In this paper we defined a formal semantics of blameworthiness using the principle of alternative possibilities and Marc Pauly's framework for logics of coalitional power. Our main technical result is a sound and complete bimodal logical system that captures properties of  blameworthiness in this setting. This work is meant to be a step towards formal reasoning about blameworthiness and responsibility.

Recently, there have been several works combining Marc Pauly's  and epistemic logic frameworks to study the interplay between knowledge and know-how strategies~\cite{aa12aamas,aa16jlc,nt17aamas,nt18aaai,nt18ai,nt18aamas} as well as a study of such strategies in a single-agent case~\cite{fhlw17ijcai}. Knowledge is clearly relevant to the study of blameworthiness. Indeed, one can hardly be blamed for not preventing an outcome if one had a strategy to prevent it but did not know what this strategy was. Furthermore, in the legal domain, responsibility is connected to knowledge. For example, US Model Penal Code specifies five types of responsibility based on what the responsible party knew or should have known~\cite{ali62}. In the future, we plan to explore the interplay between knowledge and blameworthiness/responsibility by introducing epistemic component to the framework of this paper.

\label{end of paper}

\end{document}